\pdfoutput=1
\documentclass[sigconf]{acmart}
\pagenumbering{gobble}
\setcopyright{none}
\settopmatter{printacmref=false}

\renewcommand\footnotetextcopyrightpermission[1]{} 
\pagestyle{plain} 

\makeatletter
\renewcommand\@formatdoi[1]{\ignorespaces}
\makeatother

\usepackage{algorithm2e}
\usepackage{multirow}
\usepackage{rotating}
\usepackage{lineno}
\usepackage{caption}
\usepackage{subcaption}
\usepackage{graphicx}
\usepackage{wrapfig}
\usepackage{amsmath,amsfonts}
\usepackage{amsthm}
\usepackage{wrapfig}
\theoremstyle{definition}

\theoremstyle{plain}
\newtheorem{assumption}{Assumption}
\newtheorem{remark}{Remark}
\newtheorem{corollary}{Corollary}
\newtheorem{conjecture}{Conjecture}
\newtheorem{theorem}{Theorem}
\newtheorem{lemma}{Lemma}
\usepackage{longtable}
\usepackage{caption}
\usepackage{subcaption}
\usepackage{physics}

\usepackage{array}
\usepackage{siunitx}
\usepackage[normalem]{ulem}
\usepackage{colortbl}
\usepackage{hhline}
\usepackage{calc}
\usepackage{tabularx}
\usepackage{threeparttable}
\usepackage{adjustbox}
\usepackage{soul}
\usepackage{hyperref}

\begin{document}

\title{GCF: Generalized Causal Forest for Heterogeneous Treatment Effects Estimation in Online Marketplace}
\thanks{Accepted at \textit{KDD-22 Workshop on Decision Intelligence and Analytics for Online Marketplaces: Jobs, Ridesharing, Retail, and Beyond}}

\author{Shu Wan}
\email{swan@asu.com}
\authornote{Both authors contributed equally to this research.}
\authornote{This work has been done when the author was with Didi Chuxing.}
\author{Chen Zheng}
\email{zhengchen04@meituan.com}
\authornotemark[1]
\authornotemark[2]
\affiliation{\institution{Didi Chuxing}  \country{China}}

\author{Zhonggen Sun}
\email{sun@didiglobal.com}
\author{Mengfan Xu}
\authornotemark[2]
\email{MengfanXu2023@u.northwestern.edu}
\affiliation{\institution{Didi Chuxing}  \country{China}}

\author{Xiaoqing Yang}
\email{yangxiaoqing@didiglobal.com}
\affiliation{\institution{Didi Chuxing}  \country{China}}

\author{Hongtu Zhu}
\authornotemark[2]
\email{htzhu@email.unc.edu}
\affiliation{\institution{University of North Carolina at Chapel Hill} \country{United States}}

\author{Jiecheng Guo}
\email{jasonguo@didiglobal.com}
\affiliation{\institution{Didi Chuxing} \country{China}}


\ccsdesc[500]{Computing methodologies~Classification and regression trees}
\ccsdesc[300]{Computing methodologies~Kernel methods}
\ccsdesc[300]{Computing methodologies~Modeling methodologies}
\ccsdesc[300]{Applied computing~Marketing}
\ccsdesc[500]{Mathematics of computing~Nonparametric statistics}

\keywords{Treatment effects estimation, continuous treatment, uplift modeling, online marketplace}

\begin{abstract}
Uplift modeling is a rapidly growing approach that utilizes causal inference and machine learning methods to directly estimate the heterogeneous treatment effects, which has been widely applied to various online marketplaces to assist large-scale decision-making in recent years. The existing popular models, like causal forest (CF), are limited to either discrete treatments or posing parametric assumptions on the outcome-treatment relationship that may suffer model misspecification. However, continuous treatments (e.g., price, duration) often arise in marketplaces. To alleviate these restrictions, we use a kernel-based doubly robust estimator to recover the non-parametric dose-response functions that can flexibly model continuous treatment effects. Moreover, we propose a generic distance-based splitting criterion to capture the heterogeneity for the continuous treatments. We call the proposed algorithm generalized causal forest (GCF) as it generalizes the use case of CF to a much broader setting. We show the effectiveness of GCF by deriving the asymptotic property of the estimator and comparing it to popular uplift modeling methods on both synthetic and real-world datasets. We implement GCF on Spark and successfully deploy it into a large-scale online pricing system at a leading ride-sharing company. Online A/B testing results further validate the superiority of GCF.
\end{abstract}

\maketitle

\pagestyle{plain} 

\section{Introduction}
\label{sec:introduction}
The rising of ride-sharing platforms such as DiDi, Uber, and Lyft, helps to provide convenient mobility services for riders and flexible job opportunities for drivers. However, it is extremely challenging for ride-sharing platforms to efficiently balance the demand and supply given the highly dynamic nature in this two-sided marketplace. For example, in a short period of time, the number of idle drivers in a given area can be seen as a constant since vehicle re-positioning takes time. On the other hand, the riders' requests can easily shift due to various reasons such as the change of price, disturbance of ETA and severity of road congestion. Therefore, adjusting the demand is at the heart of the ride-sharing platforms strategies and often draws more attention \citep{lam2017demand, shapiro2018density}.

\vspace{-4mm}
\begin{figure}[h]
  \centering
  \includegraphics[width=0.25\linewidth]{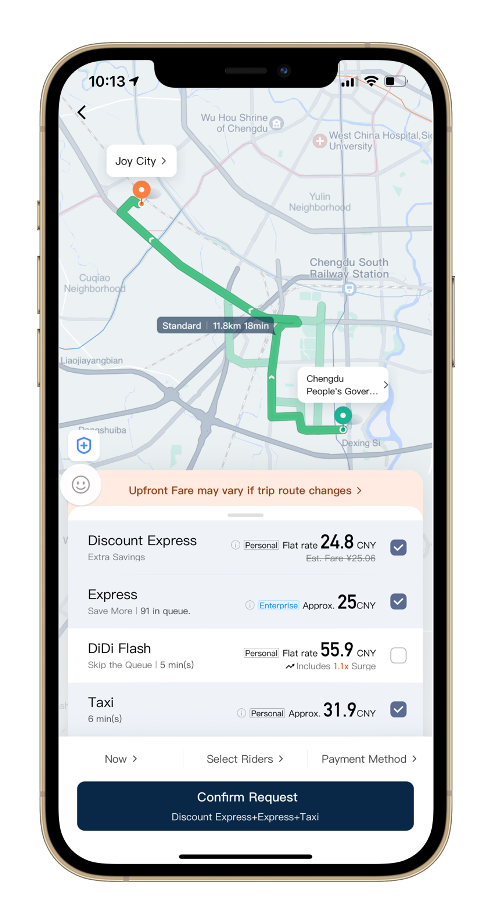}
  \vspace*{-4mm}
  \caption{A snapshot of Trip Request}
  \label{snapshot}
\end{figure}
\vspace{-4mm}
A typical journey of how a rider requests a trip on a ride-sharing app is shown in Figure \ref{snapshot}. A rider first decides the pick-up and the drop-off locations and sees the estimated price of multiple mobility options, then responds by either clicking on the 'Confirm Request' button or ending up with no request. Among all information shown on this page, we can observe that price plays a pivotal role in the riders' choices. In our case, pricing for trips is at the origin-destination-time (ODT) level which guarantees a consistent experience for all riders on the same ODT. As stated earlier, the number of drivers is relatively unchanged in a given ODT,  which leaves opportunities to use discount strategies to stimulate riders' requests when the supply is excessive as requests usually increase when the price goes down. But this is not a trivial decision to make as a too low price can result in excessive requests and subsequently longer waiting time, henceforth hurting riders' experience and deteriorating the efficiency of the marketplace. On the flip side, if the incentive is not strong enough, it may not be sufficient to stimulate enough requests to balance idle drivers on the same ODT. The optimal discount can only be achieved when the demand price curve is accurately estimated. However, the curve may significantly differ across different ODTs.

In Figure~\ref{odt}, for example, we present how demand varies with price on different ODTs. Therefore, the same discount for different ODTs makes little sense. In other words, the platforms should assign appropriate discounts to ODT accordingly by leveraging ODT's specific information and real-time supply-demand relationship to identify the impact of discounts on the demand curve.

\begin{figure}[h]
  \centering
  \includegraphics[width=1\linewidth]{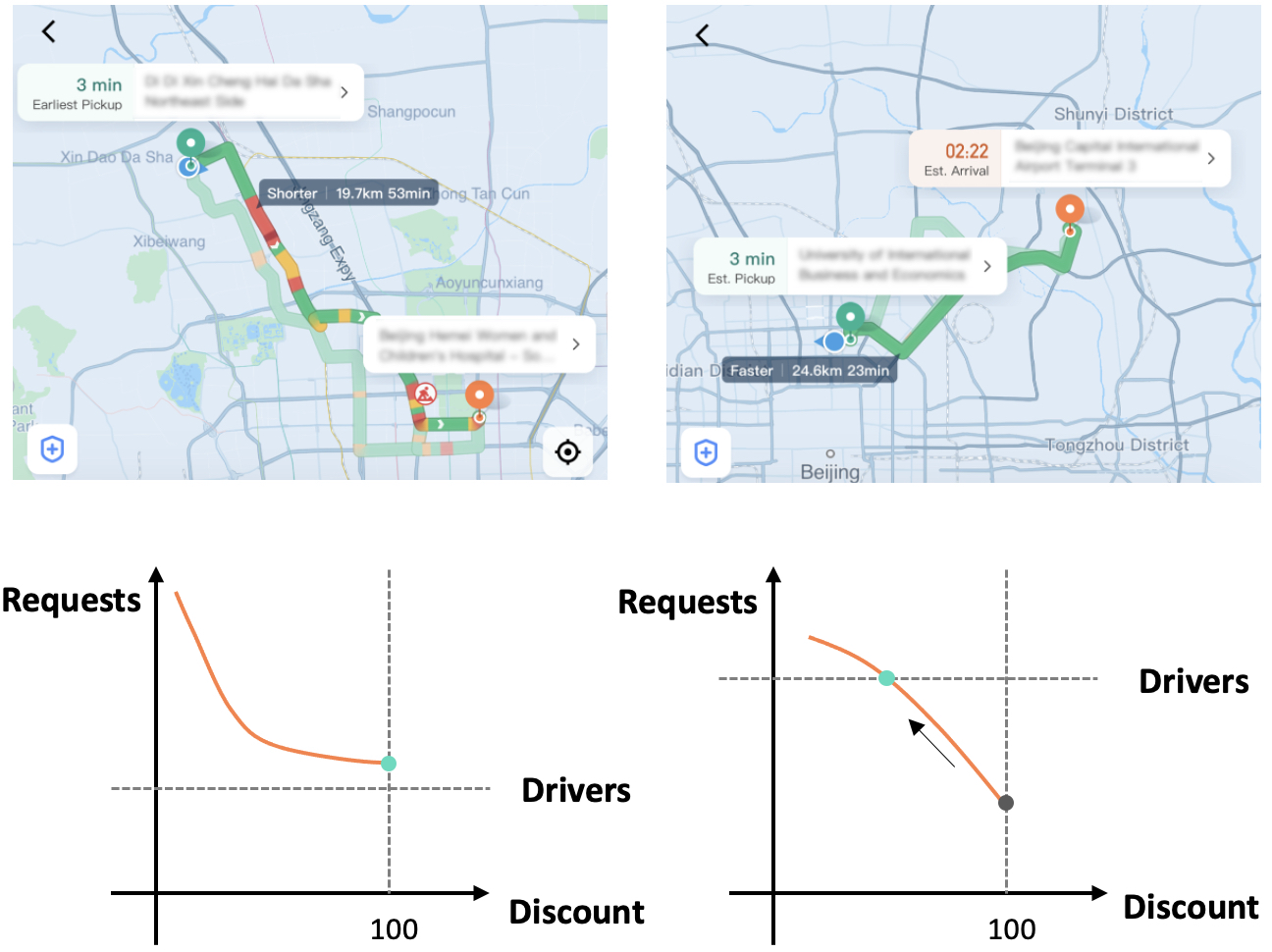}
    \vspace*{-4mm}
  \caption{This plot displays different discount strategies at different ODT scenarios. When the demand is over supply, no discount should be charged (left plot), while in the off-peak time, we use discount to stimulate the requests to balance the marketplace.}
  \label{odt}
\end{figure}
More generally, the question is how to estimate the discount effect on demand under different scenarios, formally described as the problem of heterogeneous treatment effect (HTE) estimation in the field of causal inference, which has been of growing interests for decision-makers in a wide spectrum of contexts. It uncovers the effect of interventions at sub-group levels, thereby providing highly tailored suggestions rather than a one-size-fits-all policy. Moreover, for online ride-sharing marketplaces, (multiple) continuous treatments are prevalent as multiple travel options are available as shown in Figure~\ref{snapshot}. Estimating the causal effect under continuous treatments presents a challenge for the marketplace while maintaining crucial for maximizing its efficiency and performance.

A series of algorithms have been developed to address the problem of HTE estimation. The earliest solution dates back to when uplift modeling had the most appeals as in ~\cite{radcliffe2007using} and has been recently applied to online marketplaces such as ~\cite{zhao2017uplift, 10.1145/3447548.3467083}. However, these implementations fail to discuss how to mitigate confounding bias which is prevalent in observational data. In contrast, statistical and econometric methods, such as Causal Forest (CF)~\cite{athey2019generalized, chernozhukov2018double} directly consider the relationship between outcome and treatment in presence of confounding variables. Nevertheless, the theoretical property of the estimators is built on the assumption that the outcome is partially linear in the treatments. In practice, the effect of discounts on requests can be any function of treatments, as illustrated in Figure~\ref{odt}.
To address this problem, \citep{blundell2012measuring, kennedy2017nonparametric, colangelo2020double, singh2020reproducing} propose using nonparametric regression to solve nonlinear HTE estimations. Our work builds on the theoretical results in these works. Meanwhile, the scalability of an algorithm is key to its deployment to the online marketplace with massive data. In recent years, neural network based methods, such as \citep{shalit2017estimating, nie2021vcnet} are also been developed, but they lack interpretability which is important in the high stake settings like pricing strategy.

In this paper we overcome the aforementioned challenges by proposing generalized casual forest (GCF), a method that provides nonparametric HTE estimations for continuous treatments. GCF has shown its advantages over existing baselines on both synthetic datasets and real-world datasets, and demonstrated its high performance on online deployment at a leading ride-sharing company. Moreover, we implement GCF on Spark and obtain much higher computational efficiency by distributed computing, which pave the way to a wide application to massive online marketplaces.

The rest of the paper is organized as follows. Section $2$ introduces preliminary notations and backgrounds. Then in Section $3$, we formally propose GCF. We validate the performance of GCF by applying it to both synthetic and real-world datasets in Section $4$. Finally, in Section $5$, the practical effectiveness of GCF is demonstrated by its superior performance in an online experiment. The Spark implementation of GCF is also briefly introduced in this section. We conclude the paper with some discussions in Section $6$.

\section{Preliminaries}
\label{sec:3}

\subsection{Notations and Assumptions}

We first introduce the notations for HTE estimation with continuous treatments. Following the potential outcome framework in~\citep{neyman1923applications, rubin1974estimating}, we let $\boldsymbol{T}$ be the $d_t$-dim continuous treatment, $\boldsymbol{X} = (X^j)_{j=1}^{p_X}$ be the $p_X$-dim confounding variables, $\boldsymbol{U}$ be the $p_U$-dim outcome-specific covariates, $\boldsymbol{Z}$ be the $p_Z$-dim treatment-specific covariates independent of $\boldsymbol{U}$, and $Y$ be the outcome of interest. The population $\Omega: (\boldsymbol{X}, \boldsymbol{U}, \boldsymbol{Z}, Y, \boldsymbol{T}) \in \mathbb{R}^{p_X + p_U + p_Z + 1 + d_t}$ satisfies 
\begin{center}
    $Y = g(T,\boldsymbol{X},\boldsymbol{U}) + \epsilon ; T = f(\boldsymbol{X},\boldsymbol{Z}) + \nu$
\end{center}
where $\epsilon,\nu$ are noises of zero mean and $g: \mathbb{R}^{p} \times \mathbb{R} \rightarrow \mathbb{R}$ and $f: \mathbb{R}^{p}  \rightarrow \mathbb{R}$. The i.i.d. samples $\{(\boldsymbol{X}_i, \boldsymbol{U}_i, \boldsymbol{Z}_i, Y_i, T_{i}), i=1, \ldots, n\}$ are drawn from $\Omega$. In practice, the decomposition of $\boldsymbol{X}, \boldsymbol{U}, \boldsymbol{Z}$ from observed covariates is difficult or infeasible. Without additional specifications, we use $\boldsymbol{X}$ as a summary of observed covariates throughout the paper. The potential outcome under treatment $t$ is $Y_{(\boldsymbol{t})}$. Recall that Propensity Score (PS)~\citep{rubin1974estimating} for a discrete treatment is $P(\boldsymbol{T}=\boldsymbol{t}|\boldsymbol{X})$ , i.e. the probability for a unit receiving treatment $\boldsymbol{t}$ given the covariates $\boldsymbol{X}$. In the context of continuous treatments, ~\citep{rubin1974estimating, hirano2004propensity} introduce generalized propensity score (GPS), denoted by probability density function $\pi(\boldsymbol{T}=\boldsymbol{t}|\boldsymbol{X})$.

Our estimand of interest is CATE $\theta(\boldsymbol{t},\boldsymbol{X})$, formally defined as
\begin{align*}
\theta(\boldsymbol{t},\boldsymbol{X}) & = E[Y_{(\boldsymbol{t})}|\boldsymbol{X}] - E[Y_{(\boldsymbol{0})}|\boldsymbol{X}] 
\end{align*}
To identify $\theta(\boldsymbol{t},\boldsymbol{X})$ , common assumptions are made as in \citep{holland1986statistics, kennedy2017nonparametric}.
\begin{assumption}{Consistency:}\label{assumption:1} $E[Y|\boldsymbol{T}=\boldsymbol{t}] = E[Y_{(\boldsymbol{t})}|\boldsymbol{T}=\boldsymbol{t}]$, i.e. the outcome of any sample solely depends on its treatment.
\end{assumption}
\begin{assumption}{Ignorability:}\label{assumption:2} The potential outcomes $Y_{(\boldsymbol{T})}$ is independent of treatment $\boldsymbol{T}$ given covariates $\boldsymbol{X}$.
\end{assumption}
\begin{assumption}{Positivity:}\label{assumption:3} The GPS $\pi(\boldsymbol{T}=\boldsymbol{t}|\boldsymbol{X}) > p_{min} >0$, $\forall \boldsymbol{t},\boldsymbol{X}$, i.e. the density is bounded away from 0. 
\end{assumption}
Under Assumption 1-3, we have 
\begin{align*}
\theta(\boldsymbol{t},\boldsymbol{X}) & = E[Y_{(\boldsymbol{t})}|\boldsymbol{X}] - E[Y_{(\boldsymbol{0})}|\boldsymbol{X}] = E[Y|\boldsymbol{T} = \boldsymbol{t}, \boldsymbol{X}] - E[Y|\boldsymbol{T} = \boldsymbol{0}, \boldsymbol{X}] \\
            & = E[g(\boldsymbol{t},\boldsymbol{X})|\boldsymbol{T} = \boldsymbol{t}, \boldsymbol{X}] - E[g(\boldsymbol{t},\boldsymbol{X})|\boldsymbol{T} = \boldsymbol{0}, \boldsymbol{X}]
\end{align*}
where the first equality holds by Assumption \ref{assumption:1} and the second one holds by Assumption \ref{assumption:2}. Positivity is indispensable for the conditional expectation to be well-defined in the last line but is often too strong. In practice, it can be reduced to weak positivity as follows.
\begin{assumption}{Weak Positivity:}\label{assumption:4} The variance $ \sigma(\pi) > \sigma_{min} >0 $ where $\sigma(\pi) = \int_{\boldsymbol{t}} \boldsymbol{t}^2 \cdot \pi(\boldsymbol{T} = \boldsymbol{t}|\boldsymbol{X})d\boldsymbol{t} - \left(\int_{\boldsymbol{t}} \boldsymbol{t} \cdot \pi(\boldsymbol{T} = \boldsymbol{t}|\boldsymbol{X})d\boldsymbol{t}\right)^2$.
\end{assumption}

\subsection{Dose-Response Function}
For continuous treatments, the treatment effect can be fully characterized by Dose-Response Function (DRF), formally defined as $\mu(\boldsymbol{t}) :=  E[Y|\boldsymbol{T} = \boldsymbol{t}]$.
Generally speaking, DRF can be any function of $\boldsymbol{t}$, either parametric or non-parametric and thereby motivating our proposed method. The estimand of interest related to DRF usually depends on the contexts~\citep{galagate2016causal}. Examples include treatment effect $\left(E[Y_{(\boldsymbol{t})}] - E[Y_{(\boldsymbol{0})}]\right)$, partial effect $\left(\dfrac{E[Y_{(t_2)}] - E[Y_{(t_1)}]}{t_2-t_1}\right)$ and elasticity $\left(\dfrac{\partial log(E[Y_{(t)}])}{\partial log(t)}\right)$. Specifically, conditional DRF (CDRF) $\mu(\boldsymbol{t},\boldsymbol{X}) = E[Y|\boldsymbol{T} = \boldsymbol{t}, \boldsymbol{X}]$
characterizes the outcome with interventions at individual or subgroup levels. It is a proxy for DRF on samples in the subgroup and leads to $\theta(\boldsymbol{t},\boldsymbol{X})$ by noting that $
\theta(\boldsymbol{t},\boldsymbol{X}) = \mu(\boldsymbol{t},\boldsymbol{X})-\mu(\boldsymbol{0},\boldsymbol{X})$, i.e. $\mu(\boldsymbol{t},\boldsymbol{X}) $.

\subsection{Kernel Regression and Double/Debiased Estimators}
For non-parametric function approximations, kernel regression ~\citep{fan2018local} works with theoretical guarantees. Precisely, to model the non-parametric relationship $y= g(\boldsymbol{x})+\epsilon$ given data $(\boldsymbol{X}_i,Y_i)$, kernel regression produces an estimator by $\hat{g}(\boldsymbol{x}) =  \dfrac{\sum_{i=1}^{n} K_h(\boldsymbol{x},\boldsymbol{X}_i) \cdot y_i}{\sum_{i=1}^{n} K_h(\boldsymbol{x},\boldsymbol{X}_i)}$
where $K_h(\boldsymbol{x},\boldsymbol{X}_i)$ is a scaled kernel density with $h$ being the bandwidth. Typical choices of kernel include Uniform, Epanechnikov, Biweight, Triweight and Gaussian.

An asymptotically unbiased estimator combines kernel regression with doubly robust estimators in the context of DRF. Formally, the estimator for DRF $\mu(\boldsymbol{t})$ is proposed as 
$$\hat{\mu}(\boldsymbol{t}) = \dfrac{1}{n} \cdot \sum_{i=1}^{n}\left(\hat{\mu}(\boldsymbol{t},\boldsymbol{X}_i)+ \frac{K_h(\boldsymbol{T}_i,\boldsymbol{t})}{\hat{\pi}(\boldsymbol{t}|\boldsymbol{X}_i)} \cdot \left(Y_i - \hat{\mu}(\boldsymbol{t},\boldsymbol{X}_i)\right)\right)$$ by using two-stage estimating technique and kernel-based DML method as proposed in \citep{colangelo2020double}. 
First, we estimate the CDRF $\mu(\boldsymbol{t},\boldsymbol{X})$ with $\hat{\mu}(\boldsymbol{t},\boldsymbol{X})$ and the GPS $\pi(\boldsymbol{T} = \boldsymbol{t}|\boldsymbol{X})$ with $\hat{\pi}(\boldsymbol{t}|\boldsymbol{X})$ by means of any machine learning methods. Plugging the nuisance estimators then gives us the kernel-based double/debiased estimator.

\section{Generalized Causal Forest}
\label{sec:gcf}
In this section, we formally present the proposed algorithm, namely GCF. It relaxes the partial linear assumption on treatment response relationship in CF by considering a new splitting criterion with non-parametric DRF and estimating it with the kernel-based doubly robust estimator. In what follows, we show a workflow of GCF at both the training stage and the prediction stage, followed by elaborating on the splitting criterion, CATE estimators, and its asymptotic property. The details on GCF's practical tweaks and Spark implementation are given in the supplementary.

\RestyleAlgo{ruled}

\begin{algorithm}[h]
\caption{Generalized Causal Forest}\label{alg:Framwork} 

\textbf{Input} {dataset $\Omega = (\boldsymbol{X}_i,T_i,Y_i), i=1,\ldots,n$; number of trees $B$; number of features sampled for growing a tree $mtry$; the minimum sample size on each leaf node $min.node.size$; honesty fraction $\alpha$; tolerance $\tau$; positivity threshold $\zeta$}\;
\textbf{Trainings}
\Begin{
Split $\Omega$ with a ($\alpha$,$1-\alpha$) ratio to get $\Omega_1$ and $\Omega_2$ for honesty\;
Pre-train outcome regression model $\hat{\mu}$ and treatment density estimation $\hat{\pi}$ on sample $\Omega_1$\;
$b \longleftarrow 1$\;
\While{$b \leq B$}{
      Sample a feature set $\boldsymbol{X}^S$ of size $mtry$ from $\boldsymbol{X}$\;
      \For{The stopping rule is not satisfied}{
      Identify each parent node $P$\;
      Compute the splitting criterion $\Delta(\cdot)$ (\ref{splitting creterion}) over the samples in $\Omega_1$ with $\boldsymbol{X}^S$\;
      Grow the tree $\mathcal{T}_b$ by splitting at the parent node $P$ according to the  $\Delta(\cdot)$\;
      }
      Assign samples in $\Omega_2$ to leaf nodes based on $\mathcal{T}_b$ and let $b \longleftarrow b + 1$\;
}
}
\textbf{Output} Causal forest with $B$ trees by recursive partitioning on $X$; node assignments for samples in $\Omega_2$\;
\textbf{Predictions}
\Begin{
CDRF estimation $\hat{\mu}_b(t,\boldsymbol{x})$ by the local weighted average over the outcomes of samples in $\Omega_2$ that falls into $\mathcal{L}_b(\boldsymbol{x})$\, as $\sum_{i=1}^{n}\dfrac{1_{X_i \in \mathcal{L}_b(\boldsymbol{x})} \cdot 1_{T_i=t}\cdot Y_i}{|\mathcal{L}_b(\boldsymbol{x})|}$;
 CATE estimation $\hat{\theta}_b(t,\boldsymbol{x}) = \hat{\mu}_b(t,\boldsymbol{x})-\hat{\mu}_b(0,\boldsymbol{x})$ and $\hat{\theta}(t,\boldsymbol{x}) = \dfrac{1}{B}\sum_{b=1}^{B}\hat{\theta}_b(t,\boldsymbol{x})$;
}
\end{algorithm}

GCF grows trees $\mathcal{T}_b$ by repeating the tree-growing process $B$ times through bootstrapping. At the training stage, we construct a tree $\mathcal{T}_b$ by recursive partition based on maximizing a newly proposed splitting criterion $\Delta(c_1,c_2)$. It is proportional to the discrepancy between the CDRF $\theta$ given by the left and right child nodes, respectively, thereby reflecting the underlying heterogeneity and being an estimator for CDRF with a theoretical guarantee.

Our algorithm is implemented on Spark for large-scale data processing and the mechanism of the tree-growing process is different from that in CF. Precisely, data is stored at the master machine and trees are cloned to each branch machine. Data is randomly distributed to branch machines for parallel computation and recollected to the master machine for integration. The tree will be updated by the integrated criterion on each branch machine. This distributed framework leverages the computational efficiency of multiple machines and speeds up the training process. 

Trees stop growing when some stopping criteria are met, such as when the sample size of child nodes is smaller than $min.node.size$ or the splitting criterion $\Delta(c_1,c_2) \leq min.info.gain$. We borrow the honesty principle in ~\citep{athey2019generalized} by partitioning the training samples into two parts where one is for growing a tree and the other only involves CATE estimations on the leaf nodes. Each training data can either be utilized to estimate CATE or contribute to growing a tree. At the prediction stage, the leaf node of given samples $\boldsymbol{x}$ on each tree $\mathcal{T}_b$ is denoted by $\mathcal{L}_b(\boldsymbol{x})$. The estimation of CDRF $\hat{\mu}_b(\boldsymbol{t},\boldsymbol{x})$ on $\mathcal{L}_b(\boldsymbol{x})$ takes a local weighted average on training samples in $\mathcal{L}_b(\boldsymbol{x})$, with weight $\alpha_i$ being
\begin{equation}\label{equa:1}
    \alpha_i = \sum_{i=1}^{n}\dfrac{1_{X_i \in \mathcal{L}_b(\boldsymbol{x})} \cdot 1_{\boldsymbol{T}_i=\boldsymbol{t}}\cdot Y_i}{|\mathcal{L}_b(\boldsymbol{x})|}
\end{equation}
Accordingly, the CATE estimator on each tree $b$ is given by $\hat{\theta}_b(\boldsymbol{t},\boldsymbol{x}) = \hat{\mu}_b(\boldsymbol{t},\boldsymbol{x})-\hat{\mu}_b(\boldsymbol{0},\boldsymbol{x})$. The final CATE estimation is the average of $\mathcal{L}_b(\boldsymbol{x})$ over $B$ trees, as $\hat{\theta}(\boldsymbol{t},\boldsymbol{x}) = \dfrac{1}{B}\sum_{b=1}^{B}\hat{\theta}_b(\boldsymbol{t},\boldsymbol{x})$.
\subsection{Splitting Criterion}
\label{sub:splitting}

Given a parent node $P$ and training samples with covariates $\boldsymbol{X}_\omega$, our splitting criterion $\Delta(C_1,C_2)$ for left child node $C_1$ and right child node $C_2$ is proposed as 
\begin{equation}
   \Delta = \Delta(C_1,C_2)= \dfrac{n_{C_1}n_{C_2}}{n_P} \cdot ||\hat{\theta}_{C_1} - \hat{\theta}_{C_2}||_F^{2} = \dfrac{n_{C_1}n_{C_2}}{n_P} \cdot D(\hat{\theta}_{C_1}, \hat{\theta}_{C_2})
   \label{splitting creterion}
\end{equation}
where $\hat{\theta}_{C_1}$, $\hat{\theta}_{C_2}$ are the CATE estimators in the Banach space of $\theta_x$ on $C_1$ and $C_2$, respectively. The sample sizes of parent node, left child node and right child node are $n_P$, $n_{C_1}$ and $n_{C_2}$, respectively. The ratio $\dfrac{n_{C_1} n_{C_2}}{n_P^2}$ in $\Delta(C_1,C_2)$ is to balance the sample sizes of two child nodes. Distance metric $D$ measures the distance between $\hat{\theta}_{C_1}$ and $\hat{\theta}_{C_1}$ in the Banach space $ \{h:\mathcal{R}^{d_t} \to \mathcal{R}\}\}$ and thereby representing the heterogeneity. Some commonly used metrics \citep{dette2018equivalence} are $
D_{1} = \int_t|\hat{\theta}_{C_1}(t) - \hat{\theta}_{C_2}(t)|dt,
D_{2} = \int_t|\hat{\theta}_{C_1}(t) - \hat{\theta}_{C_2}(t)|^2dt,
D_{\infty} = \max_t|\hat{\theta}_{C_1}(t) - \hat{\theta}_{C_2}(t)|$ induced by $L_{1}$, $L_{2}$ and $L_{\infty}$ norms, respectively.
We next show how to estimate CDRF $\hat{\theta}_{C_1}$ and $\hat{\theta}_{C_1}$ using kernel-based doubly robust estimators aforementioned. 
\subsection{CATE Estimation} \label{sub:non-para}

We point out the kernel-based DML estimator in \citep{colangelo2020double} can be adapted to the DRF estimation in GCF and guide tree splittings recursively.

Specifically, at each splitting, herein we propose to estimate the DRF of child node $C$ by
\begin{equation*}
    \Tilde{\mu}_C(t) =  \frac{1}{|n_C|}\sum_{i \in C}\left(\hat{\mu}(t,\boldsymbol{X}_i)+ \frac{K_h(T_i-t)}{\hat{\pi}(t|\boldsymbol{X}_i)} \cdot (Y_i -\hat{\mu}(t,\boldsymbol{X}_i))\right) 
\end{equation*}
where $\hat{\mu}(t,\boldsymbol{X}_i)$ and $\hat{\pi}(t|\boldsymbol{X}_i)$ are the pretrained estimator for $\mu(t,\boldsymbol{X})$ and GPS at $\boldsymbol{X}_i$, respectively. It can be viewed as integrating estimators for CDRF, DRF, ATE at child-node level repeatedly, instead of a one-step estimation as a whole in~\citep{colangelo2020double}
\paragraph{CDRF estimator:}$\Tilde{\mu}(t, \boldsymbol{X}_i) = \hat{\mu}(t,\boldsymbol{X}_i) + \frac{K_h(T_i-t)}{\hat{\pi}(t|\boldsymbol{X}_i)} \cdot \left(Y_i - \hat{\mu}(t,\boldsymbol{X}_i)\right)$
\paragraph{DRF and ATE estimator}
The ATE estimator is given by
\begin{align*}
& \Tilde{\mu}(t) = \dfrac{\sum_{i \in C_1} \Tilde{\mu}(t, \boldsymbol{X}_i)}{|n_{C_1}|}, \quad \hat{\theta}_{C_1}(t) = \Tilde{\mu}(t) - \Tilde{\mu}(0)
\end{align*}

In estimating CATE, the smoothness condition of DRF plays an important role. For convex and smooth DRF with $L_p$ norm, we can approximate $\hat{\theta}_{C_1}(t)$ with a closed-form $t \cdot \frac{\partial \tilde{\mu}(t)}{\partial t}$ where $\frac{\partial \tilde{\mu}(t)}{\partial t}$, namely PDRF, is denoted by $\hat{\Phi}(t)$. To see why this holds, by the convexity and specifying $||\cdot||_F$ to be $L_p$ norm, we have that 
\begin{align*}
   & \dfrac{n_{C_1}n_{C_2}}{n_P} ||\hat{\Phi}_{C_1} - \hat{\Phi}_{C_2}||_F^{2} (T_{min})^2 \leq \Delta \leq \dfrac{n_{C_1}n_{C_2}}{n_P} ||\hat{\Phi}_{C_1} - \hat{\Phi}_{C_2}||_F^{2}  (T_{max})^2
\end{align*}
i.e.value $\Delta$ is equivalent to $\tilde{\Delta} = \dfrac{n_{C_1}n_{C_2}}{n_P} \cdot ||\hat{\Phi}_{C_1} - \hat{\Phi}_{C_2}||_F^{2}$, the objective at the tree splitting in Algorithm~\ref{alg:Framwork}.  

For Gaussian kernels and 1-dim treatments, $\Phi(t)$ is explicitly as $ \frac{\sum_{i=1}^{n}K'_h(T_i-t)Y_i}{\sum_{i=1}^{n}K_h(T_i-t)}-  \big[\sum_{i=1}^{n}K_h(T_i-t)Y_i\big]\frac{\sum_{i=1}^{n}K'_h(T_i-t)}{[\sum_{i=1}^{n}K_h(T_i-t)]^2}$.
However, for general convex DRF, the PDRF may be discontinuous or even ill-defined. For example, non-differentiable kernel functions like uniform, epanechnikov, biweight and triweight, $\Phi$ can only be estimated via numerical approximations
$\dfrac{E[\mu(t+\delta)-\mu(t)]}{\delta}$. 

\subsection{Asymptotic Property}

We state the convergence property of estimators under mild assumptions. We show that the final estimator for CATE $\theta(t,x)$, or equivalently $\theta_x(t)$, by GCF is doubly robust in the sense that it is asymptotically unbiased when the estimator for nuisance function $\pi(\mathbf{x})$ or $\mu(\mathbf{x})$ is unbiased. The result is summarized as Theorem~\ref{the:theorem1} and the proof for Theorem~\ref{the:theorem1} is in Supplementary~\ref{ssec:proof1}.

\begin{theorem}\label{the:theorem1}
Under certain assumptions, the final estimator $\hat{\theta}_x(t)$ given by the proposed GCF converges to CATE $\theta_x(t)$ in the functional space as the number of samples $n \to \infty$, i.e.
\begin{equation*}
  \norm{\hat{\theta}_x(t) - \theta_x(t)}_F \to 0 \text{ as }  n \to \infty
\end{equation*}
\end{theorem}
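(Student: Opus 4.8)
The plan is to decompose the total error $\norm{\hat{\theta}_x(t) - \theta_x(t)}_F$ into three sources and control each in turn: the estimation error of the kernel-based doubly robust DRF estimator at a fixed query $(t,\boldsymbol{x})$, the error incurred by localizing the global estimator to a shrinking leaf of an honest tree, and the averaging over the $B$ trees of the forest. First I would invoke the pointwise asymptotics of the child-node estimator $\tilde{\mu}_C(t)$. Since $\tilde{\mu}_C(t)$ is exactly the kernel-based DML estimator of~\citep{colangelo2020double} restricted to the samples in $C$, under the smoothness of the joint density, a bounded differentiable second-order kernel, and the product-rate condition $\sqrt{nh^{d_t}}\,\sqrt{\sigma_x(\hat{\mu})}\,\sqrt{\sigma_x(\hat{\pi})} \overset{p}{\to} 0$ together with an undersmoothing bandwidth satisfying $h^2\sqrt{nh^{d_t}} \to 0$, it is asymptotically Gaussian and in particular $\tilde{\mu}_C(t) - \mu_C(t) \overset{p}{\to} 0$ for each fixed $t$. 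The product structure of the remainder is precisely what yields double robustness: the bias vanishes whenever either $\hat{\mu}$ or $\hat{\pi}$ is consistent, so only one nuisance need be correctly specified.

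Next, applying Slutsky's theorem to the difference $\hat{\theta}_C(t) = \tilde{\mu}_C(t) - \tilde{\mu}_C(0)$ gives pointwise consistency $\hat{\theta}_C(t) - \theta_C(t) \overset{p}{\to} 0$ for each $t$. The substantive step is to lift this pointwise statement to convergence in the functional norm $\norm{\cdot}_F$. Specializing $\norm{\cdot}_F$ to an $L_p$ norm, I would write $\norm{\hat{\theta}_x - \theta_x}_F^p = \int_t |\hat{\theta}_x(t) - \theta_x(t)|^p\,dt$ and pass the limit inside the integral by dominated convergence, which requires a uniform-in-$t$ envelope for the pointwise error. I would obtain this envelope from a stochastic-equicontinuity argument: the positivity (Assumption~\ref{assumption:3}) or weak positivity (Assumption~\ref{assumption:4}) bounds the inverse-GPS weights, and the boundedness and Lipschitz continuity of $\theta_x$ in $t$ let the empirical process indexed by $t$ be controlled uniformly over the compact treatment range $[T_{min},T_{max}]$, so pointwise convergence upgrades to uniform convergence and hence to $\norm{\cdot}_F$ convergence.

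The remaining ingredient is the leaf localization. Borrowing the honest-forest machinery of~\citep{athey2019generalized}, the recursive partition built on $\Omega_1$ produces leaves whose diameter in $\boldsymbol{X}$ shrinks to zero as $n \to \infty$, while honesty guarantees that the $\Omega_2$ samples used for the leaf estimate are independent of the partition, so the leaf estimator $\hat{\theta}_b(t,\boldsymbol{x})$ is conditionally unbiased for the leaf-averaged CATE. As the leaf containing $\boldsymbol{x}$ contracts, the within-leaf variation of $\theta_{\boldsymbol{X}}(t)$ vanishes by the Lipschitz continuity of $\theta$ in $\boldsymbol{X}$, so the leaf target converges to $\theta_{\boldsymbol{x}}(t)$; combined with the functional consistency of the within-leaf estimator from the previous step this gives $\norm{\hat{\theta}_b(t,\boldsymbol{x}) - \theta_{\boldsymbol{x}}(t)}_F \overset{p}{\to} 0$ for each tree. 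Finally, the forest estimator $\hat{\theta}(t,\boldsymbol{x}) = B^{-1}\sum_{b=1}^{B}\hat{\theta}_b(t,\boldsymbol{x})$ is an average of consistent estimators, so the triangle inequality in $\norm{\cdot}_F$ delivers the claim.

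I expect the main obstacle to be the interplay between the leaf-diameter shrinkage rate and the kernel bandwidth $h$. The kernel estimator inside each leaf needs enough samples within bandwidth $h$ of the target treatment to be consistent, yet honest leaves must shrink fast enough in $\boldsymbol{X}$ to kill the within-leaf bias; reconciling these two rates --- effectively a joint condition on the leaf-sample growth, $h$, and the dimensions $d_t$ and $p_X$ --- is the delicate part, as is making the stochastic-equicontinuity bound uniform enough to justify interchanging limit and integral in the functional norm.
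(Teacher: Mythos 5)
Your proposal is correct in outline and reaches the theorem at essentially the same level of rigor as the paper, but it takes a genuinely different route through the middle of the argument. The first step coincides: like you, the paper establishes consistency of the kernel-based doubly robust child-node estimator by checking the conditions of~\cite{colangelo2020double} under an undersmoothing bandwidth $h^2\sqrt{nh^{d_t}} \to 0$ (its Lemma~1 in the supplementary), and it likewise isolates the sample-size/bandwidth tension you flag, via a subsampling condition $s \geq \sqrt{nh^{d_t}}$, $s/n \to 0$, $s \to \infty$ (its ``Specification 1'' remark). Where you diverge is in how the forest is handled. The paper recasts GCF in the estimating-equation formalism of~\cite{athey2019generalized}: it defines a functional score $\psi_{\theta_x,\nu(x)}(\omega_i) = \theta_x(T_i) - Y_i + \nu(x)$, shows the doubly robust estimator asymptotically solves the weighted empirical equation $\min_{\theta,\nu}\norm{\sum_i \alpha_i \psi_{\theta,\nu}(\omega_i)}$ by the strong law plus its Lemma~1, verifies the abstract GRF hypotheses (Lipschitz score, regularity, existence of solutions) through three remarks, and then asserts that the proof of Theorem~1 in~\cite{athey2019generalized} goes through ``line by line'' with the functional norm. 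You instead argue directly: pointwise consistency, then an explicit lift to $\norm{\cdot}_F$ convergence via stochastic equicontinuity and dominated convergence over the compact treatment range, then leaf-diameter shrinkage plus honesty plus Lipschitzness of $\theta$ in $\boldsymbol{x}$ to kill the localization bias, then tree averaging. Each approach buys something: the paper's reduction inherits the GRF machinery wholesale and naturally positions the result for normality-type extensions, but it leans on the unproved claim that GRF's finite-dimensional arguments survive the passage to Banach-space-valued parameters; your decomposition makes the two genuinely delicate analytic points --- uniformity in $t$ and the bandwidth-versus-leaf-size rate reconciliation --- explicit rather than burying them in a citation, at the cost of having to re-derive (or borrow without the score-function scaffolding) the leaf-shrinkage guarantees that GRF's regular-splitting conditions provide.
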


\begin{corollary}
The convergence can be of any form depending on the chosen norm of $\norm{\cdot}_F$. For example, the result is $L_p$ convergence when $F$ is $L_p$ with $1< p < \infty $ and will converge uniformly when $F$ is $L_{\infty}$. 
\end{corollary}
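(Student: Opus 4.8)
The plan is to establish the functional convergence in two stages: first prove pointwise consistency of $\hat{\theta}_x(t)$ at each fixed dose $t$, and then upgrade this to convergence in the norm $\norm{\cdot}_F$ by exploiting smoothness of the target in $t$. I would decompose the error as
\begin{align*}
\hat{\theta}_x(t) - \theta_x(t) = \big(\hat{\theta}_x(t) - \bar{\theta}_x(t)\big) + \big(\bar{\theta}_x(t) - \theta_x(t)\big),
\end{align*}
where $\bar{\theta}_x(t)$ denotes the population analogue of the doubly robust statistic restricted to the leaf containing $x$. The first bracket is an estimation error handled by the doubly robust machinery, and the second is a leaf-approximation bias handled by the geometry of the recursive partition.

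For the estimation term I would invoke the kernel-based double/debiased construction of Section~\ref{sub:non-para}. Conditional on the tree structure, the honesty split ensures the samples of $\Omega_2$ forming $\hat{\mu}_b(t,x)$ are independent of the partition, so the leaf statistic $\Tilde{\mu}_C(t)$ is unbiased up to kernel bias for the within-leaf DRF. Following the analysis of~\cite{colangelo2020double}, under consistency of the nuisance estimators $\hat{\mu}$ and $\hat{\pi}$ and a bandwidth satisfying $h^2\sqrt{nh^{d_t}}\to 0$, the Neyman-orthogonality of the score eliminates the first-order contribution of the nuisance errors, yielding $\Tilde{\mu}_C(t)-\mu_C(t)\overset{p}{\to}0$ at each $t$. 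Differencing at $t$ and $\boldsymbol{0}$ and applying Slutsky's theorem gives pointwise consistency of $\hat{\theta}_b(t,x)$, and averaging over the $B$ trees only reduces variance, so $\hat{\theta}_x(t)-\theta_x(t)\overset{p}{\to}0$ pointwise.

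The leaf-approximation bias requires the diameter of $\mathcal{L}_b(x)$ to shrink as $n\to\infty$. Here I would import the standard regularity conditions on recursive partitioning from~\cite{athey2019generalized} — balanced, random-split, $\alpha$-regular trees — which force every leaf to occupy a vanishing fraction of the feature space while retaining enough samples. Combined with Lipschitz continuity of $x\mapsto\theta_x(t)$ and the bound $\norm{\theta_x}\le C$, shrinking leaves give $\bar{\theta}_x(t)\to\theta_x(t)$, so the bias term vanishes.

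The genuinely delicate step, and the one I expect to be the main obstacle, is promoting pointwise consistency to convergence in $\norm{\cdot}_F$. For this I would establish stochastic equicontinuity of the family $\{\hat{\theta}_x(\cdot)\}$ in the dose argument, uniformly in $n$: the Lipschitz and twice-differentiability hypotheses on $\theta_x$ together with $\norm{\theta_x}\le C$ supply equicontinuity of the limit, while the smoothness of $K_h$ and convexity of the DRF transfer it to the estimator. Given equicontinuity, pointwise convergence on a countable dense subset of the compact dose domain extends to uniform convergence by an Arzel\`a--Ascoli argument, which is exactly $L_\infty$ convergence; for $1<p<\infty$, uniform integrability furnished by $\norm{\theta_x}\le C$ and dominated convergence then deliver $L_p$ convergence, matching the Corollary. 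The tension is that controlling equicontinuity as $h\to 0$ must contend with the kernel derivative $K_h'$ blowing up like $h^{-2}$; reconciling this with $h^2\sqrt{nh^{d_t}}\to 0$ is where the argument must be handled most carefully, and it is precisely the role of the convexity and smoothness assumptions that permits replacing $\hat{\theta}$ by the better-behaved PDRF surrogate $t\cdot\hat{\Phi}(t)$ of Section~\ref{sub:non-para}.
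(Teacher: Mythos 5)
The corollary you are asked to prove requires essentially no analysis: it is an immediate, definitional consequence of Theorem~\ref{the:theorem1}. That theorem already asserts $\norm{\hat{\theta}_x(t) - \theta_x(t)}_F \to 0$ for \emph{whatever} functional norm $\norm{\cdot}_F$ was fixed in the splitting criterion, because the paper's proof of the theorem runs the generalized-random-forest argument of \cite{athey2019generalized} directly in that norm (via the functional score $\psi_{\theta_x,\nu(x)}$ and the kernel-DML lemma of \cite{colangelo2020double}), not pointwise in $t$. Granting the theorem, choosing $F = L_p$ makes its conclusion read ``the $L_p$ norm of the error tends to zero,'' which \emph{is} the definition of $L_p$ convergence; choosing $F = L_\infty$ makes it read ``the sup norm of the error tends to zero,'' which \emph{is} the definition of uniform convergence. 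That is the entire content of the corollary. You have instead set out to re-derive the convergence from scratch: pointwise consistency of the doubly robust leaf statistic, leaf-diameter control, and then an upgrade from pointwise to uniform convergence via stochastic equicontinuity and an Arzel\`a--Ascoli argument. This inverts the logical structure --- a corollary should consume its theorem, not rebuild it --- and it relocates all of the difficulty into exactly the step you concede you cannot close: equicontinuity of $\{\hat{\theta}_x(\cdot)\}$ uniformly in $n$ while $h \to 0$ and $K_h'$ scales like $h^{-2}$. As written, that unresolved step is a genuine gap, and it is a gap the paper never has to confront because its theorem is proved in the functional norm from the outset.

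Two smaller observations. First, even within your own plan the $L_p$ case for $1 < p < \infty$ needs no uniform-integrability or dominated-convergence argument: on the compact dose interval $[T_{min}, T_{max}]$ one has $\norm{f}_{L_p} \le |T_{max}-T_{min}|^{1/p}\,\norm{f}_{L_\infty}$, so uniform convergence hands you $L_p$ convergence for free. Second, your pipeline, if it were completed, would amount to an alternative (and arguably more self-contained) proof of Theorem~\ref{the:theorem1} itself, since the paper's own proof defers to running \cite{athey2019generalized} ``line by line'' in the functional norm; but that is a different and much larger task than proving this corollary, and in its current form the equicontinuity step remains a sketch rather than a proof.
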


Our theoretical result distinguishes from the existing ones as follows. First, while the results of~\cite{DR2020} only hold for binary treatments, we allow for continuous treatments. 
While~\cite{athey2019generalized} propose the convergence of the point estimator for the parameter of CATE with continuous treatments, their DRF is presumed to be linear. We remove the assumption and show its convergence in the functional space. Lastly, the non-linear doubly robust estimator given by~\cite{colangelo2020double} is asymptotically unbiased with respect to ATE, rather than CATE.

\section{Experiment}
\label{sec:experiment}
This section provides numerical evidence of GCF's performance on simulation and real-world datasets. GCF is implemented on Spark with distance metric $D = d_2$. We employ random forest (RF) \citep{breiman2001random}, CF, and Kennedy \citep{codeKennedy2017} as baseline methods, to show the effectiveness of GCF both on non-parametric treatment effects estimation and on computational efficiency. More specifically, RF takes less causality into account, CF has a similar tree-based framework for causal inference but may not reflect the non-linear relationship and Kennedy propose the non-parametric kernel estimator but with limited computational efficiency. 

\subsection{Evaluation}

For the evaluation on synthetic datasets with ground truths, we follow what is used in \citep{hill2011bayesian}. Specifically, we use PEHE and RMSE to evaluate the bias and variance of estimators by different methods. Let $n$ be the number of samples,  $\hat{\theta}_i^t$ and $\theta_i^t$ be the predicted and true treatment effect of $i^{th}$ sample for treatment $t$, and $\hat{\theta}^t = \sum_{i=1}^{t} \hat{y}_i^t - \hat{y}_i^0$. PEHE, RMSE, and ADRF are $\dfrac{\sum_{i=1}^{n}(|\hat{\theta}^t - \theta^t|))}{n}$, $\sqrt{\dfrac{\sum_{t=1}^{n}(\hat{\theta}^t - \theta^t)^2}{n}}$, $\hat{y(t)} = \int_{x} \hat{\mu}(t,x) dx$. For real-world problems without ground truth, Qini Curve and Qini score are utilized~\citep{gutierrez2017causal}. A larger Qini Score reflects a better model for HTE estimations.

  \providecommand{\huxb}[2]{\arrayrulecolor[RGB]{#1}\global\arrayrulewidth=#2pt}
  \providecommand{\huxvb}[2]{\color[RGB]{#1}\vrule width #2pt}
  \providecommand{\huxtpad}[1]{\rule{0pt}{#1}}
  \providecommand{\huxbpad}[1]{\rule[-#1]{0pt}{#1}}

\begin{table*}[htb]
\begin{centerbox}
\begin{threeparttable}
 \setlength{\tabcolsep}{0pt}
 \caption{Simulation results on datasets with different DRFs where number of samples $n = 1000, p_X=50, p_Y=5, p_Z=5$. Standard errors are in parenthesis over 100 simulations.}\label{tab: RMSE}
\begin{tabularx}{0.95\textwidth}{p{0.135714285714286\textwidth} p{0.135714285714286\textwidth} p{0.135714285714286\textwidth} p{0.135714285714286\textwidth} p{0.135714285714286\textwidth} p{0.135714285714286\textwidth} p{0.135714285714286\textwidth}}

\hhline{>{\huxb{0, 0, 0}{0.4}}->{\huxb{0, 0, 0}{0.4}}->{\huxb{0, 0, 0}{0.4}}->{\huxb{0, 0, 0}{0.4}}->{\huxb{0, 0, 0}{0.4}}->{\huxb{0, 0, 0}{0.4}}->{\huxb{0, 0, 0}{0.4}}-}
\arrayrulecolor{black}

\hhline{}
\arrayrulecolor{black}

\multicolumn{1}{!{\huxvb{0, 0, 0}{0}}p{0.135714285714286\textwidth}!{\huxvb{0, 0, 0}{0}}}{\hspace{6pt}\parbox[b]{0.135714285714286\textwidth-6pt-6pt}{\huxtpad{3pt + 1em}\centering \textit{\textbf{{\fontsize{10pt}{12pt}\selectfont }}}\huxbpad{3pt}}} &
\multicolumn{2}{p{0.271428571428571\textwidth+2\tabcolsep}!{\huxvb{0, 0, 0}{0}}}{\hspace{6pt}\parbox[b]{0.271428571428571\textwidth+2\tabcolsep-6pt-6pt}{\huxtpad{3pt + 1em}\centering \textit{\textbf{{\fontsize{10pt}{12pt}\selectfont Polynomial}}}\huxbpad{3pt}}} &
\multicolumn{2}{p{0.271428571428571\textwidth+2\tabcolsep}!{\huxvb{0, 0, 0}{0}}}{\hspace{6pt}\parbox[b]{0.271428571428571\textwidth+2\tabcolsep-6pt-6pt}{\huxtpad{3pt + 1em}\centering \textit{\textbf{{\fontsize{10pt}{12pt}\selectfont Sinusoidal}}}\huxbpad{3pt}}} &
\multicolumn{2}{p{0.271428571428571\textwidth+2\tabcolsep}!{\huxvb{0, 0, 0}{0}}}{\hspace{6pt}\parbox[b]{0.271428571428571\textwidth+2\tabcolsep-6pt-6pt}{\huxtpad{3pt + 1em}\centering \textit{\textbf{{\fontsize{10pt}{12pt}\selectfont Exponential}}}\huxbpad{3pt}}} \tabularnewline[-0.5pt]

\hhline{>{\huxb{255, 255, 255}{0.4}}->{\huxb{0, 0, 0}{0.4}}->{\huxb{0, 0, 0}{0.4}}->{\huxb{0, 0, 0}{0.4}}->{\huxb{0, 0, 0}{0.4}}->{\huxb{0, 0, 0}{0.4}}->{\huxb{0, 0, 0}{0.4}}-}
\arrayrulecolor{black}

\multicolumn{1}{!{\huxvb{0, 0, 0}{0}}p{0.135714285714286\textwidth}!{\huxvb{0, 0, 0}{0}}}{\hspace{6pt}\parbox[b]{0.135714285714286\textwidth-6pt-6pt}{\huxtpad{3pt + 1em}\raggedright \textbf{{\fontsize{10pt}{12pt}\selectfont Methods}}\huxbpad{3pt}}} &
\multicolumn{1}{p{0.135714285714286\textwidth}!{\huxvb{0, 0, 0}{0}}}{\hspace{6pt}\parbox[b]{0.135714285714286\textwidth-6pt-6pt}{\huxtpad{3pt + 1em}\raggedright \textbf{{\fontsize{10pt}{12pt}\selectfont PEHE}}\huxbpad{3pt}}} &
\multicolumn{1}{p{0.135714285714286\textwidth}!{\huxvb{0, 0, 0}{0}}}{\hspace{6pt}\parbox[b]{0.135714285714286\textwidth-6pt-6pt}{\huxtpad{3pt + 1em}\raggedright \textbf{{\fontsize{10pt}{12pt}\selectfont RMSE}}\huxbpad{3pt}}} &
\multicolumn{1}{p{0.135714285714286\textwidth}!{\huxvb{0, 0, 0}{0}}}{\hspace{6pt}\parbox[b]{0.135714285714286\textwidth-6pt-6pt}{\huxtpad{3pt + 1em}\raggedright \textbf{{\fontsize{10pt}{12pt}\selectfont PEHE}}\huxbpad{3pt}}} &
\multicolumn{1}{p{0.135714285714286\textwidth}!{\huxvb{0, 0, 0}{0}}}{\hspace{6pt}\parbox[b]{0.135714285714286\textwidth-6pt-6pt}{\huxtpad{3pt + 1em}\raggedright \textbf{{\fontsize{10pt}{12pt}\selectfont RMSE}}\huxbpad{3pt}}} &
\multicolumn{1}{p{0.135714285714286\textwidth}!{\huxvb{0, 0, 0}{0}}}{\hspace{6pt}\parbox[b]{0.135714285714286\textwidth-6pt-6pt}{\huxtpad{3pt + 1em}\raggedright \textbf{{\fontsize{10pt}{12pt}\selectfont PEHE}}\huxbpad{3pt}}} &
\multicolumn{1}{p{0.135714285714286\textwidth}!{\huxvb{0, 0, 0}{0}}}{\hspace{6pt}\parbox[b]{0.135714285714286\textwidth-6pt-6pt}{\huxtpad{3pt + 1em}\raggedright \textbf{{\fontsize{10pt}{12pt}\selectfont RMSE}}\huxbpad{3pt}}} \tabularnewline[-0.5pt]

\hhline{>{\huxb{0, 0, 0}{0.4}}->{\huxb{0, 0, 0}{0.4}}->{\huxb{0, 0, 0}{0.4}}->{\huxb{0, 0, 0}{0.4}}->{\huxb{0, 0, 0}{0.4}}->{\huxb{0, 0, 0}{0.4}}->{\huxb{0, 0, 0}{0.4}}-}
\arrayrulecolor{black}

\multicolumn{1}{!{\huxvb{0, 0, 0}{0}}p{0.135714285714286\textwidth}!{\huxvb{0, 0, 0}{0}}}{\hspace{6pt}\parbox[b]{0.135714285714286\textwidth-6pt-6pt}{\huxtpad{3pt + 1em}\raggedright {\fontsize{10pt}{12pt}\selectfont RF}\huxbpad{3pt}}} &
\multicolumn{1}{p{0.135714285714286\textwidth}!{\huxvb{0, 0, 0}{0}}}{\hspace{6pt}\parbox[b]{0.135714285714286\textwidth-6pt-6pt}{\huxtpad{3pt + 1em}\raggedright {\fontsize{10pt}{12pt}\selectfont 5.63(0.4)}\huxbpad{3pt}}} &
\multicolumn{1}{p{0.135714285714286\textwidth}!{\huxvb{0, 0, 0}{0}}}{\hspace{6pt}\parbox[b]{0.135714285714286\textwidth-6pt-6pt}{\huxtpad{3pt + 1em}\raggedright {\fontsize{10pt}{12pt}\selectfont 4.61(0.3)}\huxbpad{3pt}}} &
\multicolumn{1}{p{0.135714285714286\textwidth}!{\huxvb{0, 0, 0}{0}}}{\hspace{6pt}\parbox[b]{0.135714285714286\textwidth-6pt-6pt}{\huxtpad{3pt + 1em}\raggedright {\fontsize{10pt}{12pt}\selectfont 4.27(0.4)}\huxbpad{3pt}}} &
\multicolumn{1}{p{0.135714285714286\textwidth}!{\huxvb{0, 0, 0}{0}}}{\hspace{6pt}\parbox[b]{0.135714285714286\textwidth-6pt-6pt}{\huxtpad{3pt + 1em}\raggedright {\fontsize{10pt}{12pt}\selectfont 3.21(0.2)}\huxbpad{3pt}}} &
\multicolumn{1}{p{0.135714285714286\textwidth}!{\huxvb{0, 0, 0}{0}}}{\hspace{6pt}\parbox[b]{0.135714285714286\textwidth-6pt-6pt}{\huxtpad{3pt + 1em}\raggedright {\fontsize{10pt}{12pt}\selectfont 3.57(0.4)}\huxbpad{3pt}}} &
\multicolumn{1}{p{0.135714285714286\textwidth}!{\huxvb{0, 0, 0}{0}}}{\hspace{6pt}\parbox[b]{0.135714285714286\textwidth-6pt-6pt}{\huxtpad{3pt + 1em}\raggedright {\fontsize{10pt}{12pt}\selectfont 2.62(0.2)}\huxbpad{3pt}}} \tabularnewline[-0.5pt]

\hhline{}
\arrayrulecolor{black}

\multicolumn{1}{!{\huxvb{0, 0, 0}{0}}p{0.135714285714286\textwidth}!{\huxvb{0, 0, 0}{0}}}{\hspace{6pt}\parbox[b]{0.135714285714286\textwidth-6pt-6pt}{\huxtpad{3pt + 1em}\raggedright {\fontsize{10pt}{12pt}\selectfont CF}\huxbpad{3pt}}} &
\multicolumn{1}{p{0.135714285714286\textwidth}!{\huxvb{0, 0, 0}{0}}}{\hspace{6pt}\parbox[b]{0.135714285714286\textwidth-6pt-6pt}{\huxtpad{3pt + 1em}\raggedright {\fontsize{10pt}{12pt}\selectfont 14.09(0.4)}\huxbpad{3pt}}} &
\multicolumn{1}{p{0.135714285714286\textwidth}!{\huxvb{0, 0, 0}{0}}}{\hspace{6pt}\parbox[b]{0.135714285714286\textwidth-6pt-6pt}{\huxtpad{3pt + 1em}\raggedright {\fontsize{10pt}{12pt}\selectfont 12.58(0.4)}\huxbpad{3pt}}} &
\multicolumn{1}{p{0.135714285714286\textwidth}!{\huxvb{0, 0, 0}{0}}}{\hspace{6pt}\parbox[b]{0.135714285714286\textwidth-6pt-6pt}{\huxtpad{3pt + 1em}\raggedright {\fontsize{10pt}{12pt}\selectfont 5.15(0.4)}\huxbpad{3pt}}} &
\multicolumn{1}{p{0.135714285714286\textwidth}!{\huxvb{0, 0, 0}{0}}}{\hspace{6pt}\parbox[b]{0.135714285714286\textwidth-6pt-6pt}{\huxtpad{3pt + 1em}\raggedright {\fontsize{10pt}{12pt}\selectfont 3.96(0.2)}\huxbpad{3pt}}} &
\multicolumn{1}{p{0.135714285714286\textwidth}!{\huxvb{0, 0, 0}{0}}}{\hspace{6pt}\parbox[b]{0.135714285714286\textwidth-6pt-6pt}{\huxtpad{3pt + 1em}\raggedright {\fontsize{10pt}{12pt}\selectfont 4.34(0.3)}\huxbpad{3pt}}} &
\multicolumn{1}{p{0.135714285714286\textwidth}!{\huxvb{0, 0, 0}{0}}}{\hspace{6pt}\parbox[b]{0.135714285714286\textwidth-6pt-6pt}{\huxtpad{3pt + 1em}\raggedright {\fontsize{10pt}{12pt}\selectfont 3.37(0.2)}\huxbpad{3pt}}} \tabularnewline[-0.5pt]

\hhline{}
\arrayrulecolor{black}

\multicolumn{1}{!{\huxvb{0, 0, 0}{0}}p{0.135714285714286\textwidth}!{\huxvb{0, 0, 0}{0}}}{\hspace{6pt}\parbox[b]{0.135714285714286\textwidth-6pt-6pt}{\huxtpad{3pt + 1em}\raggedright {\fontsize{10pt}{12pt}\selectfont Kennedy}\huxbpad{3pt}}} &
\multicolumn{1}{p{0.135714285714286\textwidth}!{\huxvb{0, 0, 0}{0}}}{\hspace{6pt}\parbox[b]{0.135714285714286\textwidth-6pt-6pt}{\huxtpad{3pt + 1em}\raggedright {\fontsize{10pt}{12pt}\selectfont 4.37(0.5)}\huxbpad{3pt}}} &
\multicolumn{1}{p{0.135714285714286\textwidth}!{\huxvb{0, 0, 0}{0}}}{\hspace{6pt}\parbox[b]{0.135714285714286\textwidth-6pt-6pt}{\huxtpad{3pt + 1em}\raggedright {\fontsize{10pt}{12pt}\selectfont 3.36(0.5)}\huxbpad{3pt}}} &
\multicolumn{1}{p{0.135714285714286\textwidth}!{\huxvb{0, 0, 0}{0}}}{\hspace{6pt}\parbox[b]{0.135714285714286\textwidth-6pt-6pt}{\huxtpad{3pt + 1em}\raggedright {\fontsize{10pt}{12pt}\selectfont 4.14(0.5)}\huxbpad{3pt}}} &
\multicolumn{1}{p{0.135714285714286\textwidth}!{\huxvb{0, 0, 0}{0}}}{\hspace{6pt}\parbox[b]{0.135714285714286\textwidth-6pt-6pt}{\huxtpad{3pt + 1em}\raggedright {\fontsize{10pt}{12pt}\selectfont 2.78(0.3)}\huxbpad{3pt}}} &
\multicolumn{1}{p{0.135714285714286\textwidth}!{\huxvb{0, 0, 0}{0}}}{\hspace{6pt}\parbox[b]{0.135714285714286\textwidth-6pt-6pt}{\huxtpad{3pt + 1em}\raggedright {\fontsize{10pt}{12pt}\selectfont 3.86(0.4)}\huxbpad{3pt}}} &
\multicolumn{1}{p{0.135714285714286\textwidth}!{\huxvb{0, 0, 0}{0}}}{\hspace{6pt}\parbox[b]{0.135714285714286\textwidth-6pt-6pt}{\huxtpad{3pt + 1em}\raggedright {\fontsize{10pt}{12pt}\selectfont 2.54(0.2)}\huxbpad{3pt}}} \tabularnewline[-0.5pt]

\hhline{}
\arrayrulecolor{black}

\multicolumn{1}{!{\huxvb{0, 0, 0}{0}}p{0.135714285714286\textwidth}!{\huxvb{0, 0, 0}{0}}}{\hspace{6pt}\parbox[b]{0.135714285714286\textwidth-6pt-6pt}{\huxtpad{3pt + 1em}\raggedright \textbf{{\fontsize{10pt}{12pt}\selectfont GCF}}\huxbpad{3pt}}} &
\multicolumn{1}{p{0.135714285714286\textwidth}!{\huxvb{0, 0, 0}{0}}}{\hspace{6pt}\parbox[b]{0.135714285714286\textwidth-6pt-6pt}{\huxtpad{3pt + 1em}\raggedright \textbf{{\fontsize{10pt}{12pt}\selectfont 4.14(0.3)}}\huxbpad{3pt}}} &
\multicolumn{1}{p{0.135714285714286\textwidth}!{\huxvb{0, 0, 0}{0}}}{\hspace{6pt}\parbox[b]{0.135714285714286\textwidth-6pt-6pt}{\huxtpad{3pt + 1em}\raggedright \textbf{{\fontsize{10pt}{12pt}\selectfont 2.88(0.2)}}\huxbpad{3pt}}} &
\multicolumn{1}{p{0.135714285714286\textwidth}!{\huxvb{0, 0, 0}{0}}}{\hspace{6pt}\parbox[b]{0.135714285714286\textwidth-6pt-6pt}{\huxtpad{3pt + 1em}\raggedright \textbf{{\fontsize{10pt}{12pt}\selectfont 4.05(0.4)}}\huxbpad{3pt}}} &
\multicolumn{1}{p{0.135714285714286\textwidth}!{\huxvb{0, 0, 0}{0}}}{\hspace{6pt}\parbox[b]{0.135714285714286\textwidth-6pt-6pt}{\huxtpad{3pt + 1em}\raggedright \textbf{{\fontsize{10pt}{12pt}\selectfont 2.7(0.3)}}\huxbpad{3pt}}} &
\multicolumn{1}{p{0.135714285714286\textwidth}!{\huxvb{0, 0, 0}{0}}}{\hspace{6pt}\parbox[b]{0.135714285714286\textwidth-6pt-6pt}{\huxtpad{3pt + 1em}\raggedright \textbf{{\fontsize{10pt}{12pt}\selectfont 3.85(0.4)}}\huxbpad{3pt}}} &
\multicolumn{1}{p{0.135714285714286\textwidth}!{\huxvb{0, 0, 0}{0}}}{\hspace{6pt}\parbox[b]{0.135714285714286\textwidth-6pt-6pt}{\huxtpad{3pt + 1em}\raggedright \textbf{{\fontsize{10pt}{12pt}\selectfont 2.48(0.2)}}\huxbpad{3pt}}} \tabularnewline[-0.5pt]

\hhline{>{\huxb{0, 0, 0}{0.4}}->{\huxb{0, 0, 0}{0.4}}->{\huxb{0, 0, 0}{0.4}}->{\huxb{0, 0, 0}{0.4}}->{\huxb{0, 0, 0}{0.4}}->{\huxb{0, 0, 0}{0.4}}->{\huxb{0, 0, 0}{0.4}}-}
\arrayrulecolor{black}
\end{tabularx}
\end{threeparttable}\par\end{centerbox}

\end{table*}

\subsection{Simulation}

Let $n$ be the number of samples and $p = p_X + p_U + p_Z$ be the dimension of covariates. The covariate matrix is $(X_i^j)_{i=1,\ldots,n}^{j = 1,\ldots,p_X} = (\boldsymbol{X}_1,\ldots,\boldsymbol{X}_n) \in \mathbb{R}^{n \times p_X}$. DGP is formally written as
\begin{align*}
    & Y = \mu(T) + 0.2(X^1 \cdot X^1 + X^4)T + \boldsymbol{X} \cdot \boldsymbol{\beta_X} +\boldsymbol{U} \cdot \boldsymbol{\beta_U} + \epsilon \\
    & T = 20 \cdot \Psi(\phi(\boldsymbol{X} \cdot \boldsymbol{\beta_X} + \boldsymbol{Z} \cdot \boldsymbol{\beta_Z})) + \nu
\end{align*}
where $\phi$ is the sigmoid function and $\Psi$ is the pdf of Beta distribution. Here we set DRF $\mu$ to be polynomial(Poly), exponential(Exp), and sinusoidal functions(Sinus).

We now have 6 datasets by considering every possible combination of 3 DRFs $\mu(t)$ and 2 setups of covariate matrix $\Sigma$. Noises $\epsilon$, $\nu$ follow $Unif(-1,1)$. The Covariates $\boldsymbol{X}$ and coefficients $\beta$ follow $N(0,I_{p_X})$. We also allow sparsity in the covariates by randomly setting some coefficients to be 0. We elaborate the details and the choices of hyperparameters in the supplementary.

\begin{figure}[h] 
  \centering
  \includegraphics[width=0.85\linewidth]{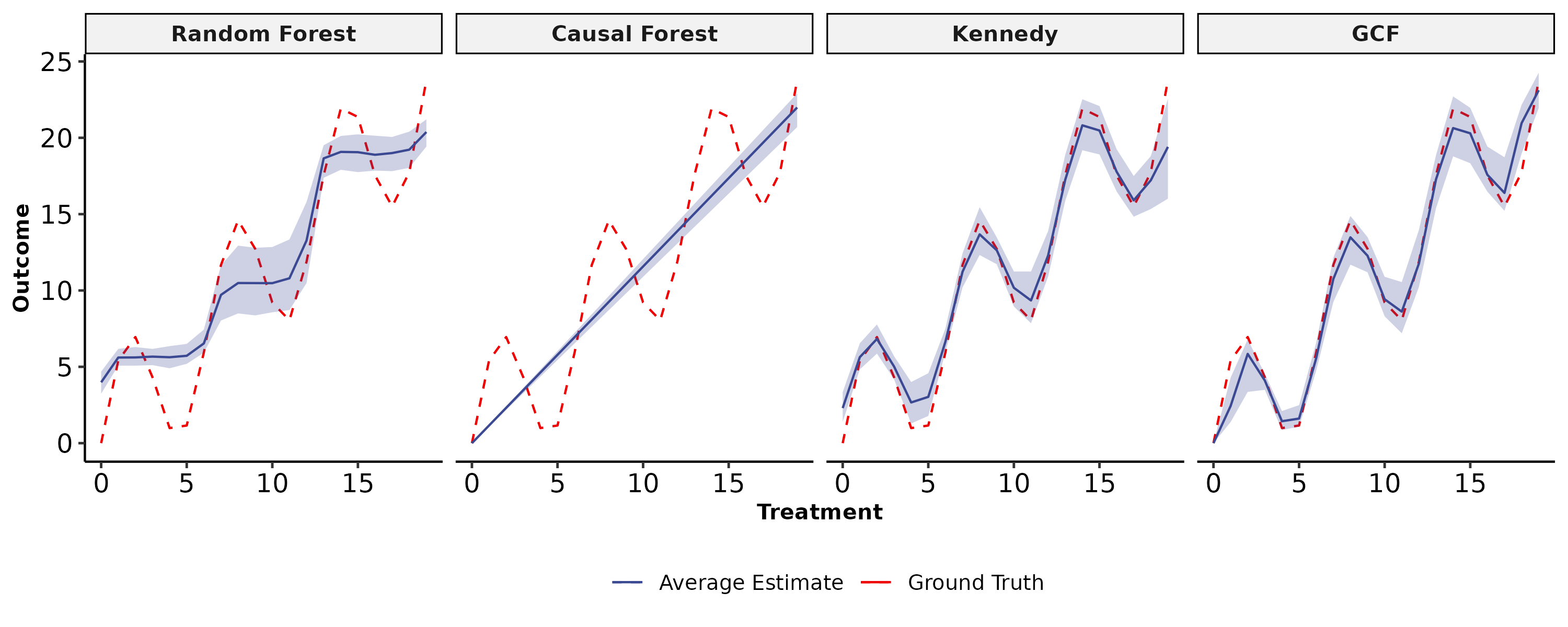}
  \caption{Comparison of ADRF estimation for different models. The ground truth is in red, the blue line denotes the median and the light-blue confidence bands are the 2.5$\%$ and 97.5$\%$ quantiles of these estimates across 100 simulations.}\label{fig:adrf_2}
\end{figure}



The simulation results on 3 DGPs are summarized in Table \ref{tab: RMSE}. Here PEHE and RMSE are averaged over 100 simulation runs and their standard error are also reported. Overall, GCF consistently outperforms baseline methods in that GCF exhibits the smallest biases and variances for multiple DGP setups. 

Additionally, we compare the Average DRF (ADRF) curve given by different models. An example of ADRF for one dataset is shown in Fig. \ref{fig:adrf_2}. Among those models, the curve of GCF is closest to the ground truth.

\subsection{Real-world Datasets}
Our method is tested on a real-world dataset with 10,698,884 entries collected from a randomized experiment conducted on a ride-sharing company where the treatment is a discount. Discounts assigned to ODTs are randomly sampled from a set of options $\{d_0,d_1,d_2$ $,d_3,d_4,d_5: d_0 < d_1 < \ldots < d_5\}$. The effect of discount on the demand is the estimand of interest. We compare GCF with CF and Xgboost \citep{chen2015xgboost} and evaluate the performance using Qini score. 

\begin{table}[H] 
  \centering
  \caption{Qini scores of models under different treatments}
  \label{tab:Qini1}
  \begin{tabular}{clllll}
    \hline
    Methods & $d_5$ & $d_4$ & $d_3$ & $d_2$ & $d_1$\\
    \hline
    Xgboost & 0.253 & 0.171 & 0.177 & 0.206 & 0.177 \\
    CF & 0.253 & 0.194 & 0.202 & 0.272 & 0.300 \\
    \textbf{GCF} & \textbf{0.309} & \textbf{0.248} & \textbf{0.305} & \textbf{0.444} & \textbf{0.780} \\
    \hline
\end{tabular}
\end{table}

The Qini score of different models are summarized in Table~\ref{tab:Qini1}. The performance of GCF is superior than the others by noting that GCF has the highest Qini score across all options of discounts.

\section{Implementation and Deployment}
We deploy our algorithm to the online pricing system of a leading ride-sharing company. The system is designed to deliver optimal pricing strategies which supports more than 550 million riders and tens of millions of drivers world-wide everyday. Given such a huge amount of data, we implement GCF on Spark to speed up model training by means of distributed computations. As illustrated in Figure~\ref{fig:onlineflow}, the system starts with collecting real-world data from the experimental system. In what follows, data is sent to the model training module where GCF and other baseline models are trained. Subsequently, the best model selected by tailored evaluation metrics (e.g., Qini score) provides treatment effect predictions for the policy optimization module, which generates a global-optimal pricing strategy for the online service.

\begin{figure}[h] 
  \centering
  \includegraphics[width=.4\textwidth]{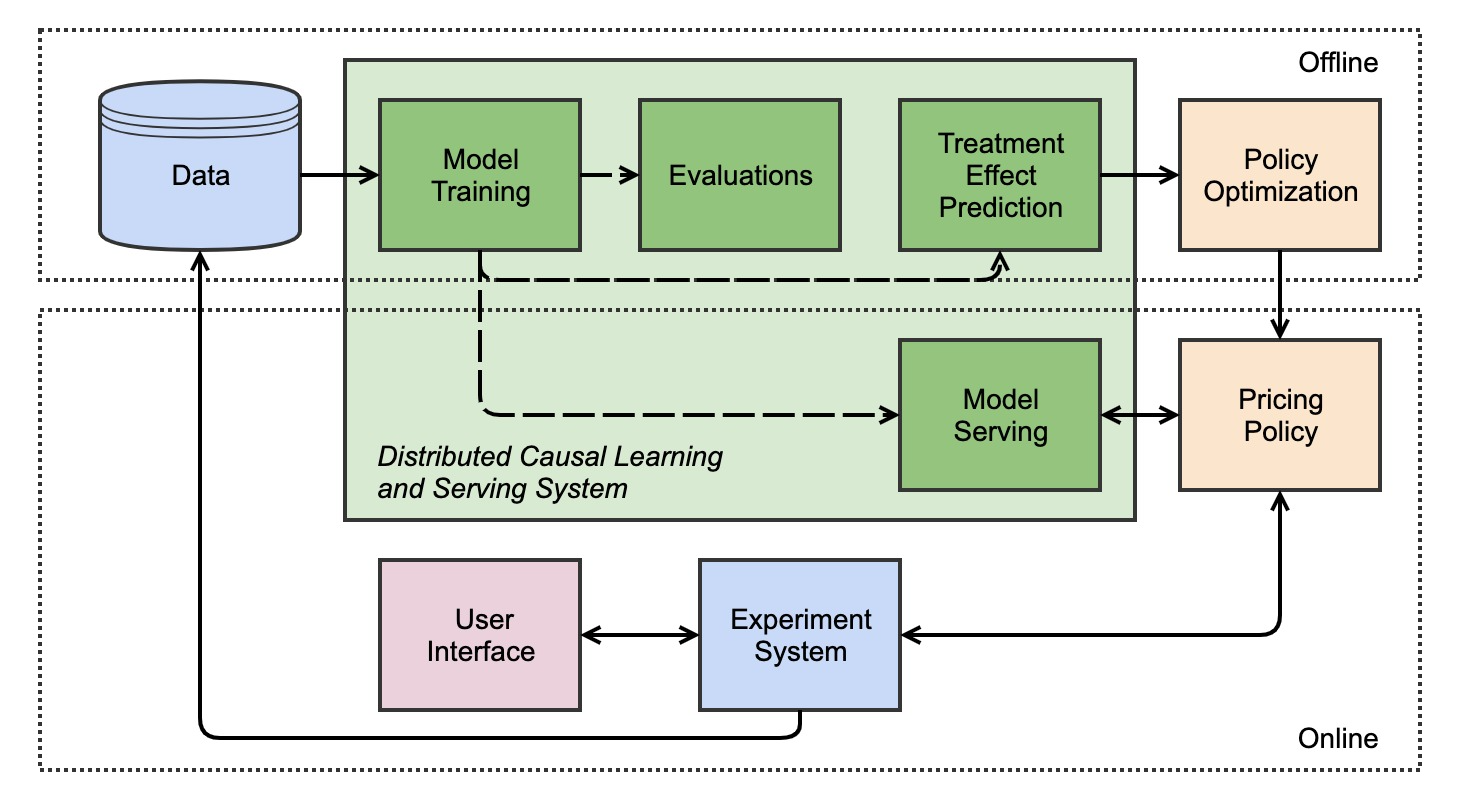}
  \caption{A schematic diagram of an online pricing system}\label{fig:onlineflow}
\end{figure}  

To examine the empirical effectiveness of our model, we compare the discount strategies resulted from GCF and CF under two business settings using online A/B testing. We conduct online A/B testing by randomly splitting ODTs into two groups. Note that data considered here takes only a small portion of the whole market, which implies that network effect can be disregarded. The key metric for performance evaluation is the increment on finished orders (FO), of which the results are as follows. Compared to CF, GCF improves FO by 15.1$\%$ and 25.2$\%$ in the single mobility options strategy and dual mobility options strategy, respectively. It shows that our model can better estimate the treatment effects on complicated systems.

\section{Conclusion}
\label{sec:conclusion}
In this paper, we propose a novel forest-based non-parametric algorithm, namely generalized causal forest, to address the problem of HTE estimation with continuous treatments. We extend CF by introducing DRF with a generic distance-based splitting criterion that maximize heterogeneity of continuous treatment effects. To estimate the DRF, we use the kernel-based doubly robust estimator to guarantee double robustness. To handle huge amount of data, we implement GCF on Spark and successfully deploy it on the online pricing system in a leading ride-sharing company. The empirical results demonstrate that our method significantly outperforms competing methods. 

Within the scope of this paper, we only cover the case of one-dimensional continuous treatment. But what we propose can be extended to multi-dimensional case without additional efforts. It also worth mention that, kernel regression may substantially suffer from the curse of dimensionality when the treatment space is high and sparse. More robust algorithms on HTE estimations for high-dimensional treatments are promising as a future research area.


\bibliographystyle{ACM-Reference-Format}
\bibliography{arxiv}

\clearpage

\clearpage

\newpage
\section{Supplementary Materials} 

\subsection{Notations}

We restate the notations that are consistent with the main paper. Following the potential outcome framework in~\citep{neyman1923applications, rubin1974estimating}, we let $T$ be the continuous treatments, $\boldsymbol{X} = (\boldsymbol{X}^j)_{j=1}^{p}$ be the $p_{X}$-dim confounder, $\boldsymbol{U}$ be the $p_{U}$-dim outcome-specific adjustment variable, $\boldsymbol{Z}$ be the $p_{Z}$-dim treatment-specific adjustment variable, and $Y$ be the observed outcome.The potential outcomes under treatment $t$ is $Y_{(t)}$. The population $\Omega = (\boldsymbol{\Sigma} = (\boldsymbol{X}, \boldsymbol{U}, \boldsymbol{Z}), Y, T) \in \mathbb{R}^{p_{X}+p_{U}+p_{Z}+1}$ satisfies 
\begin{align*}
   & Y = g(T,\boldsymbol{X},\boldsymbol{U}) + \epsilon , T = f(\boldsymbol{X},\boldsymbol{Z}) + \nu \text{ where } \epsilon,\nu \text{ are standard noises,} \\
   & \text{and } g: \mathbb{R}^{p_{X}+p_{U}} \times \mathbb{R} \rightarrow \mathbb{R} \text{ and }  f: \mathbb{R}^{p_{X}+p_{Z}}  \rightarrow \mathbb{R} .
\end{align*}
$\{(\boldsymbol{\Sigma}_i, Y_i, T_{i}), i=1, \ldots, n\}$ are i.i.d. samples drawn from the population $\Omega$. Then the covariate matrix $ \Sigma = (\Sigma_i^j)_{1 \leq i \leq n}^{1 \leq j \leq p} = (\boldsymbol{\Sigma}_1,\ldots,\boldsymbol{\Sigma}_n) \in \mathbb{R}^{n \times p}$ where $p= p_{X}+p_{U}+p_{Z}$.  The generalized propensity score is $\pi(\boldsymbol{T} = \boldsymbol{t}|\boldsymbol{X})$, which is the probability density for a unit receiving treatment $t$ given the covariate $\boldsymbol{X}$. 

\subsection{Proof of Theorem 1} \label{ssec:proof1}

We start with assumptions. 

\begin{assumption}
The pdf of the joint distribution $f(x,y,t)$ is three-times differentiable.
\end{assumption}

\begin{assumption}
The second-order symmetric kernel $k(\cdot)$ is bounded diffentiable. 
\end{assumption}

\begin{assumption}
Suppose that the estimators $\hat{\pi}$ and $\hat{\mu}$ satisfies 
\begin{align*}
& \sigma_x(\hat{\mu}) = \int_x(\hat{\mu}(t,x)-\mu(t,x))^2 \cdot f(t,x)dx \overset{p}{\to} 0 \\
& \sigma_x(\hat{\pi}) = \int_x(\hat{\pi}(t|x)-\pi(t|x))^2 \cdot f(t,x)dx \overset{p}{\to} 0 \\
& \sqrt{nh} \sqrt{\sigma_x(\hat{\mu})} \cdot \sqrt{\sigma_x(\hat{\pi})} \overset{p}{\to} 0 
\end{align*}
\end{assumption}

\begin{assumption}
Suppose that $\psi_{\theta_x,\nu(x)}$ is Lipschitz in $\theta_x$ and $\theta_x$ is lipschitz in $x$ and $\norm{\theta_x} \leq C $ for some constants $C$.
\end{assumption}

\begin{assumption}
Suppose that $\psi_{\theta_x,\nu(x)}$ is twice continuously differentiable in $\theta(x)$ and $\theta(x)$ is convex in $t$. 
\end{assumption}

\begin{assumption}
   Suppose that $\psi_{\theta_x,\nu(x)}(\omega_i) = \theta_{x}(T_i) - Y_i + \nu(x)$ is a negative gradient of a convex function and the expected score function $E[\psi_{\theta_x,\nu(x)}(\omega_i)]$ is a negative gradient of a strongly convex function. 
\end{assumption}

We first show that the doubly robust estimator for DRF and ATE is asymptotically unbiased which is crucial for the theoretical guarantee of the successive estimators for CDRF and CATE.

\begin{proof}
Under assumptions 1-3 and 5-7, we have Lemma~\ref{the:lemm} hold. 
\begin{lemma}\label{the:lemm}
Let Assumptions 1-3 and 5-7 hold.  The doubly robust estimator $\tilde{\mu}$ satisfies that for any given $t$,
\begin{align*}
    \sqrt{nh^{d_t}}(\tilde{\mu}(\boldsymbol{t}) - \mu(\boldsymbol{t})) &\to N(0,V^1_t), \quad
    \tilde{\mu}(t) - \mu(t) &\overset{p}{\to} 0
\end{align*}
where $V^1_t = E\left[\dfrac{var(Y|\boldsymbol{T} = \boldsymbol{t},X=x)}{\pi(t|x)}\right]\cdot \int_{-\infty}^{\infty}k(u)du < \infty$ as in ~\cite{colangelo2020double}. 
\end{lemma}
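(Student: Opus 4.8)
The plan is to recognize that $\tilde{\mu}(\boldsymbol{t})$ is exactly the kernel-based double/debiased estimator of the dose-response function analyzed in~\cite{colangelo2020double}, so the statement reduces to verifying that their regularity conditions are implied by Assumptions 1--3 and 5--7 and then invoking their asymptotic-normality theorem. The honesty split in Algorithm~\ref{alg:Framwork}, in which $\hat{\mu}$ and $\hat{\pi}$ are pre-trained on $\Omega_1$ while the sample average is formed over the independent fold $\Omega_2$, plays exactly the role of cross-fitting: it renders the nuisance estimators independent of the evaluation data and thereby removes any need for Donsker-type empirical-process control.

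First I would decompose the centered and scaled estimator as
\[
\sqrt{nh^{d_t}}\bigl(\tilde{\mu}(\boldsymbol{t}) - \mu(\boldsymbol{t})\bigr) = S_n + B_n + R_n,
\]
where $S_n$ is the oracle term obtained by substituting the true $\mu,\pi$ for $\hat{\mu},\hat{\pi}$, $B_n$ collects the smoothing bias, and $R_n$ gathers the error due to estimating the nuisances. For $S_n$, a Lyapunov central limit theorem for the kernel-weighted sum yields $S_n \to N(0,V_t^1)$, with the variance $V_t^1 = E[\operatorname{var}(Y\mid \boldsymbol{T}=\boldsymbol{t}, X=x)/\pi(t\mid x)]\cdot\int k(u)\,du$ arising from the inverse-GPS weighting and the kernel's second moment; the boundedness and differentiability in Assumptions 5--6 furnish the moment bounds needed for the CLT. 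For $B_n$, the three-times differentiability of the joint density (Assumption 5) together with the second-order symmetric kernel (Assumption 6) gives a bias of order $h^2$, so $B_n = O_p\bigl(\sqrt{nh^{d_t}}\,h^2\bigr)$ vanishes once the bandwidth is undersmoothed to satisfy $h^2\sqrt{nh^{d_t}} \to 0$.

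The main obstacle is controlling the nuisance-estimation remainder $R_n$, and this is precisely where the doubly robust (Neyman-orthogonal) construction is indispensable. Because the moment map $\hat{\mu}(t,\boldsymbol{X}) + K_h(T-t)\hat{\pi}(t\mid\boldsymbol{X})^{-1}\bigl(Y-\hat{\mu}(t,\boldsymbol{X})\bigr)$ has vanishing Gateaux derivative with respect to each nuisance at the truth, the first-order contributions of $\hat{\mu}-\mu$ and $\hat{\pi}-\pi$ cancel, leaving only a second-order cross term whose magnitude is controlled by $\sqrt{\sigma_x(\hat{\mu})}\cdot\sqrt{\sigma_x(\hat{\pi})}$. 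After multiplication by $\sqrt{nh^{d_t}}$ this is exactly the product that Assumption 7 forces to $0$ in probability, giving $R_n \overset{p}{\to} 0$. Combining the three pieces by Slutsky's theorem yields the stated asymptotic normality, and consistency $\tilde{\mu}(t)-\mu(t)\overset{p}{\to}0$ follows immediately since $\sqrt{nh^{d_t}}\to\infty$ makes the $O_p\bigl((nh^{d_t})^{-1/2}\bigr)$ deviation negligible. I expect the delicate step to be establishing Neyman orthogonality and the sharp second-order bound on $R_n$, since everything else is a routine verification of the hypotheses of~\cite{colangelo2020double}.
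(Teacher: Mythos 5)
Your proposal is correct and follows essentially the same route as the paper: the paper's entire proof is to assert that the hypotheses of \cite{colangelo2020double} hold under Assumptions 1--3 and 5--7 once the bandwidth is undersmoothed to satisfy $h^2\sqrt{nh^{d_t}}\to 0$, and then to invoke their asymptotic-normality result directly. Your decomposition into an oracle term, a smoothing-bias term of order $\sqrt{nh^{d_t}}\,h^2$, and a Neyman-orthogonal remainder bounded by the product $\sqrt{nh}\,\sqrt{\sigma_x(\hat{\mu})}\cdot\sqrt{\sigma_x(\hat{\pi})}$ that Assumption 7 kills is precisely a sketch of the internal argument of \cite{colangelo2020double}; it adds useful detail that the paper delegates wholesale to the citation, but it is not a different method.

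One caveat: your claim that the honesty split supplies cross-fitting is not accurate for the algorithm as written. The nuisances $\hat{\mu},\hat{\pi}$ are pre-trained on $\Omega_1$, and the doubly robust average $\tilde{\mu}_C$ entering the splitting criterion is also taken over samples of $\Omega_1$; the fold $\Omega_2$ is used only for the non-doubly-robust leaf-node estimates in (\ref{equa:1}). So nuisance estimation and the evaluation average share the same fold, and the justification you give for dispensing with Donsker-type empirical-process control does not follow from the algorithm---genuine cross-fitting (or such an equicontinuity argument) would still be needed to invoke \cite{colangelo2020double} rigorously. The paper's own proof glosses over exactly this point, so your attempt is no weaker than the paper's, but the cross-fitting assertion should be either removed or replaced by an explicit sample-splitting requirement on how $\tilde{\mu}$ is formed.
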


\begin{proof}
It is straightforward to check that all the assumptions in~\cite{colangelo2020double} hold when utilizing the optimal bandwidth $h$ that satisfies $h^2\sqrt{nh^{d_t}} \to 0$ Then we close the proof by directly following the result in~\cite{colangelo2020double}. 
\end{proof}

With the asymptotically unbiased estimator plugged in to the splitting criterion, now we adapt what are established in~\cite{athey2019generalized} to our GCF. We first define the score function in general scenarios with non-linear DRF, which is denoted by $\psi_{\theta_x,\nu(x)}(\omega_i) = \theta_{x}(\boldsymbol{T_i}) - Y_i + \nu(x)$ on sample $\omega_i = (\boldsymbol{T_i}, Y_i)$ where $\theta_{x}(\cdot) = \mu(\boldsymbol{t},x) - \mu(\boldsymbol{0},x) =\mu(\boldsymbol{t},x) - v(x)$ is the mapping from $\mathcal{R}^{d_t}$ to $\mathcal{R}$ or the distance between two points on the dose response curve. Note that the newly defined score function generalizes the parametric $\theta(x)$ as in $\theta(x)\cdot T$ to the generic one $\theta_{x}(\cdot)$, i.e. from $ \mathcal{R}$ to $ \{h:\mathcal{R}^{d_t} \to \mathcal{R}\}\}$. 

Solving an estimating equation of score function $\Psi$ is at the heart of the analysis in~\cite{athey2019generalized}. While it does not rely on the linear assumption, we can similarly propose an estimating equation as 
\begin{align*}
   \norm{\Psi_{\theta,\nu}} = \norm{E[\psi_{\theta,\nu}(\omega)]} = 0
\end{align*}
And the empirical estimating equation gives it to 
\begin{align*}
    & min_{\theta,\nu} \norm{\sum_{i=1}^{n} \alpha_i \cdot \psi_{\theta,\nu}(\omega_i)}  = \norm{\dfrac{\sum_i \theta_x(\boldsymbol{t},\boldsymbol{x_i}) + \nu(\boldsymbol{x_i})}{s} - \dfrac{\sum_i Y_i}{s}} \\
    & = \norm{\dfrac{\sum_i \mu(\boldsymbol{t},\boldsymbol{x_i}) - \mu(\boldsymbol{0},\boldsymbol{x_i}) + \mu(\boldsymbol{0},\boldsymbol{x_i})}{s}  - \dfrac{\sum_i Y_i}{s}}
\end{align*}
where $\alpha_i$ is the weight as defined in (\ref{equa:1}) and $s$ is the number of samples with non-zero weights. Here we propose that with $\tilde{\mu}(\boldsymbol{t})$, we can get an asymptotically optimal solution to the empirical equation as in~\cite{athey2019generalized} since we have that 
\begin{align*}
&\norm{\dfrac{\sum_i \mu(\boldsymbol{t},\boldsymbol{x_i}) - \mu(\boldsymbol{0},\boldsymbol{x_i}) + \mu(\boldsymbol{0},\boldsymbol{x_i})}{s}  - \dfrac{\sum_i Y_i}{s}} \\
 & \overset{p}{\to} \norm{\mu(\boldsymbol{t}) - \mu(\boldsymbol{0})+\mu(\boldsymbol{0})  - \dfrac{\sum_i Y_i}{s}} \thickapprox  \norm{\tilde{\mu}(\boldsymbol{t}) - \tilde{\mu}(\boldsymbol{0})+\tilde{\mu}(\boldsymbol{0})  - \dfrac{\sum_i Y_i}{s}} \\ 
\end{align*}
where the first convergence is by SLLN and the second approximation holds by Lemma~\ref{the:lemm} aforementioned when $s \geq  \sqrt{nh^{d_t}} \to \infty$.

To close the proof, we show that by solving the empirical estimating equation that is utilized in the splitting criterion ~\ref{splitting creterion}, the final estimator for CATE given by the tree is asymptotically unbiased. This is done by generalizing the theorem 1 in~\cite{athey2019generalized}.

\begin{remark}{Regularity:}
\begin{align*}
\psi_{\theta_x,\nu(x)}(\omega_i) & = \theta_{x}(T_i) - Y_i + \nu(x) = \lambda(\theta_{x},\nu(x);x) + \eta(g(\omega_i)) \end{align*}
where $\lambda(\theta_{x},\nu(x);x) = \theta_{x}(T_i)  + \nu(x)$ and $\eta = -I$ and $g(\omega_i) = Y_i$ satisfy the Regularity as Assumption 4 as in~\cite{athey2019generalized}.
\end{remark}

\begin{remark}{Existence of Solutions:}
The estimators for the nuisance functions are given by~\cite{colangelo2020double}. Then under Assumption 1-3 and 5-7, the convergence result as in~\cite{colangelo2020double} hold as implied by Lemma~\ref{the:lemm}. That is to say that the estimators $\hat{\theta},\hat{\nu}$ for $\theta, \nu$ given by the subsamples on the child node lead to a solution to the optimization problem 
\begin{align*}
    min_{\theta,\nu} \norm{\sum_{i=1}^{n} \alpha_i \cdot \psi_{\theta,\nu}(\omega_i)}
\end{align*}
Since the score function itself is Lipschitz which implies that it is continuous, Weierstrass theorem gives us that the above optimization problem exists a global optimum which implies the $\sum \alpha_i \cdot M = 0$. Therefore, Assumption 5 as in~\cite{athey2019generalized} hold in our scenario. This is the key step for guaranteeing the effectiveness of doubly robust estimator in the framework of CF. 
\end{remark}

\begin{remark}{Specification 1:}
From Lemma~\ref{the:lemm}, we could get that with $\sqrt{nh^{d_t}}$ samples, the estimator for DRF converges to the ground truth. Then the number of samples $s$ can be specified as $\sqrt{nh^{d_t}}$ and it satisfies  $\dfrac{s}{n} \to 0$ and $s \to \infty$. It follows Specification 1 as in~\cite{athey2019generalized}
\end{remark}

According to Remark 2 and Remark 3, the conditions of existence of solutions as denoted by Assumption 5 and Specification 1 in~\cite{athey2019generalized} is met. Meanwhile, the effectiveness of Assumption $8$ implies that the score function is Lipschitz and the expect score function is Lipschitz which are stated as Assumption 1 and 3 in~\cite{athey2019generalized}. It is easy to check that Assumption 2 in~\cite{athey2019generalized} holds when Assumption 9 holds herein. Remark 1 justifies the validity of Assumption 4 in~\cite{athey2019generalized}. Lastly, Assumption 6~\cite{athey2019generalized} is guaranteed by Assumption 10 aforementioned.  Since Assumption 1-6 and Specification 1 in~\cite{athey2019generalized} are met, we go through the proof steps of theorem 1~\cite{athey2019generalized} line by line with functional norm and distance and get
$\norm{\hat{\theta}_x(t) - \theta_x(t)}_F \to 0 \text{ as }  n \to \infty$.\end{proof}

\subsection{Data Generating Process}

Recall that the covariate matrix $ \boldsymbol{\Sigma} \in \mathbb{R}^{n \times (pX + pU +pZ)} = (X_i^j, U_i^j, Z_i^j) $ and treatment $T \in \mathbb{R}$ where $n$ is the number of observations and $p$ is the dimension of covariates. Data generating process (DGP) is
\begin{align*}
    & Y = \mu(T) +   0.2(\boldsymbol{X_1}^2 + \boldsymbol{X_4})T + \boldsymbol{X} \cdot \boldsymbol{\beta_X} +\boldsymbol{U} \cdot \boldsymbol{\beta_U} + \epsilon \\
    & T = 20 \cdot \Psi(\phi(\boldsymbol{X} \cdot \boldsymbol{\beta_X^*} + \boldsymbol{Z} \cdot \boldsymbol{\beta_Z})) + \nu
\end{align*}
where $\phi$ is the sigmoid function and $\Psi$ is the pdf of Beta distribution with shape parameters set to 2 and 3. Here we set DRF $\mu$ as polinomial(Poly), exponential(Exp), and sinusoidal functions(Sinus).
\begin{align*}
    & \mu(t) = 0.2 \cdot (t - 5)^2 - t - 5 \text{, Polynomial}\\
    & \mu(t) = log\left(1 + \dfrac{exp(t)} {t + 0.1}\right) - log(11) \text{, Exponential} \\
    & \mu(t) = 5 \cdot sin(t) + t \space \text{, Sinusoidal}
\end{align*}
The above DGP gives us multiple datasets by taking the combination of DRF $\mu(t)$ and covariate $\Sigma$ that are specified by $n$,$p$ and $DRF$. Parameter $\epsilon$, $\nu$ are Gaussian noises. Covariates $\boldsymbol{X, U, Z}$ follows $N(0,\boldsymbol{I})$ and coefficient vector $\boldsymbol{\beta}$ follow $\left[Unif(-1,1)\right]$. Following this rule, we generate 100 rounds. Meanwhile, in test data, we randomly assigned the treatments to make sure unbiased evaluations.

\subsection{Hyper-parameters}

Both RF and CF use $num.trees$ equals to 500, $min.node.size$ to 50. For Kennedy \citep{kennedy2017nonparametric}, we use the code of function $cts.eff$ from R package $npcausal$ \citep{codeKennedy2017}. SuperLearner library is set to $SL.ranger$ and $SL.glm$. We made small tweaks to $SL.ranger$ to make share same hyperparameters with RF and CF for comparability.

\subsection{Practical Considerations}

The choice of bandwidth $h$ for kernels weighs more than the choice of density functions~\citep{kennedy2017nonparametric}, since $h$ balances between the bias and the variance. Usually, a small $h$ avoids a large variance while a large $h$ reduces the bias. The empirical ways of choosing an optimal $h$ are Cross Validation and Rule-of-thumb~\citep{dehnad1987density}. 

When utilizing kernel densities for weighting, the algorithm often suffer from the boundary bias if without normalization. To this end, we normalize estimators with the cumulative density function of range $[T_{min},T_{max}]$ of treatments. Formally, the estimators are divided by 
$\int_{T_{min}}^{T_{max}} K_h(T_i-t) dt$. 

Regarding the positivity assumption in practice, we use a hyperparameter $\zeta$ to control for a strictly positive variance of GPS $\sigma(\pi)$. This is also a guarantee on the weak assumption as stated in Assumption \ref{assumption:4}. Formally, our proposed splitting criterion is$\Delta(C_1,C_2) = \dfrac{n_{C_1}n_{C_2}}{n_P} \cdot D(\hat{\theta}_{C_1}, \hat{\theta}_{C_2}) + \zeta \cdot \sigma(\hat{\pi})
   \label{splitting creterion}$
where $\hat{\pi}$ is the estimated GPS. This splitting criterion considers the variance of GPS and encourages the variety of treatments. 
\subsection{Spark Implementation}

Apache Spark \citep{meng2016mllib} has the power of large-scale data processing and provides APIs of any machine learning algorithms. Consequently, we build our proposed model on Spark to do parallel computing and distributed model training that leverage the resources of multiple machines simultaneously. The workflow of Spark with GCF is shown in Figure \ref{fig:spark_app}, which include integrated data and distributed computation. More specifically, with the parallel structure of spark, the tree-growing process runs and the process of GCF is depicted in Figure \ref{fig:spark_diag}, which is significantly different from that in CF \citep{athey2019generalized} by distributing tasks for computations.

\vspace{-2mm}
\begin{figure}[h] 
  \centering
  \includegraphics[width=0.4\textwidth]{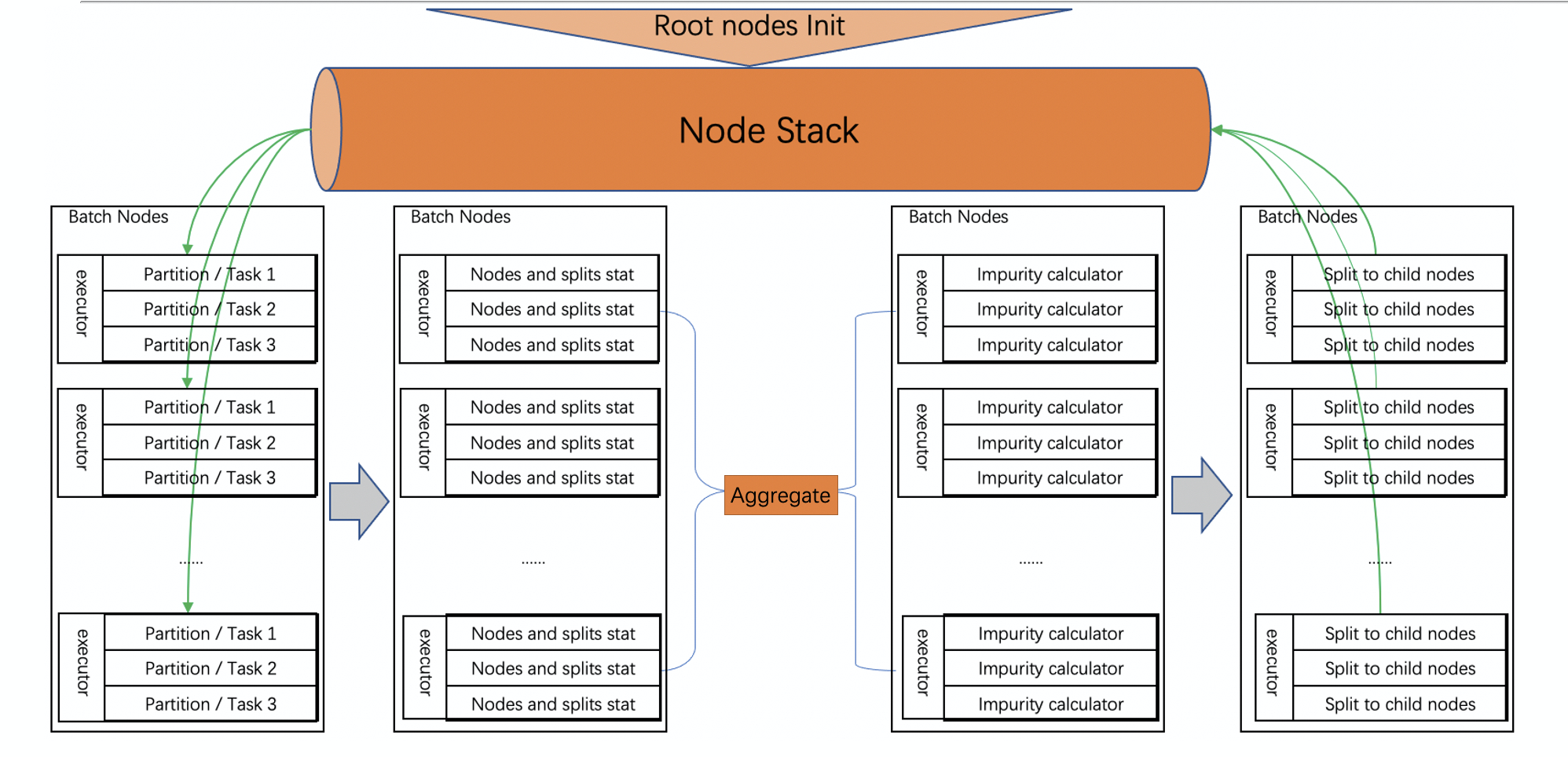}
  \caption{Parallel architecture of GCF Spark Implementation. Data are partitioned at the nodes level and distributed across executors to achieve high performance}\label{fig:spark_app}
\end{figure} 
\vspace{-6mm}

\begin{figure}[H] 
  \centering
  \includegraphics[width=0.3\textwidth]{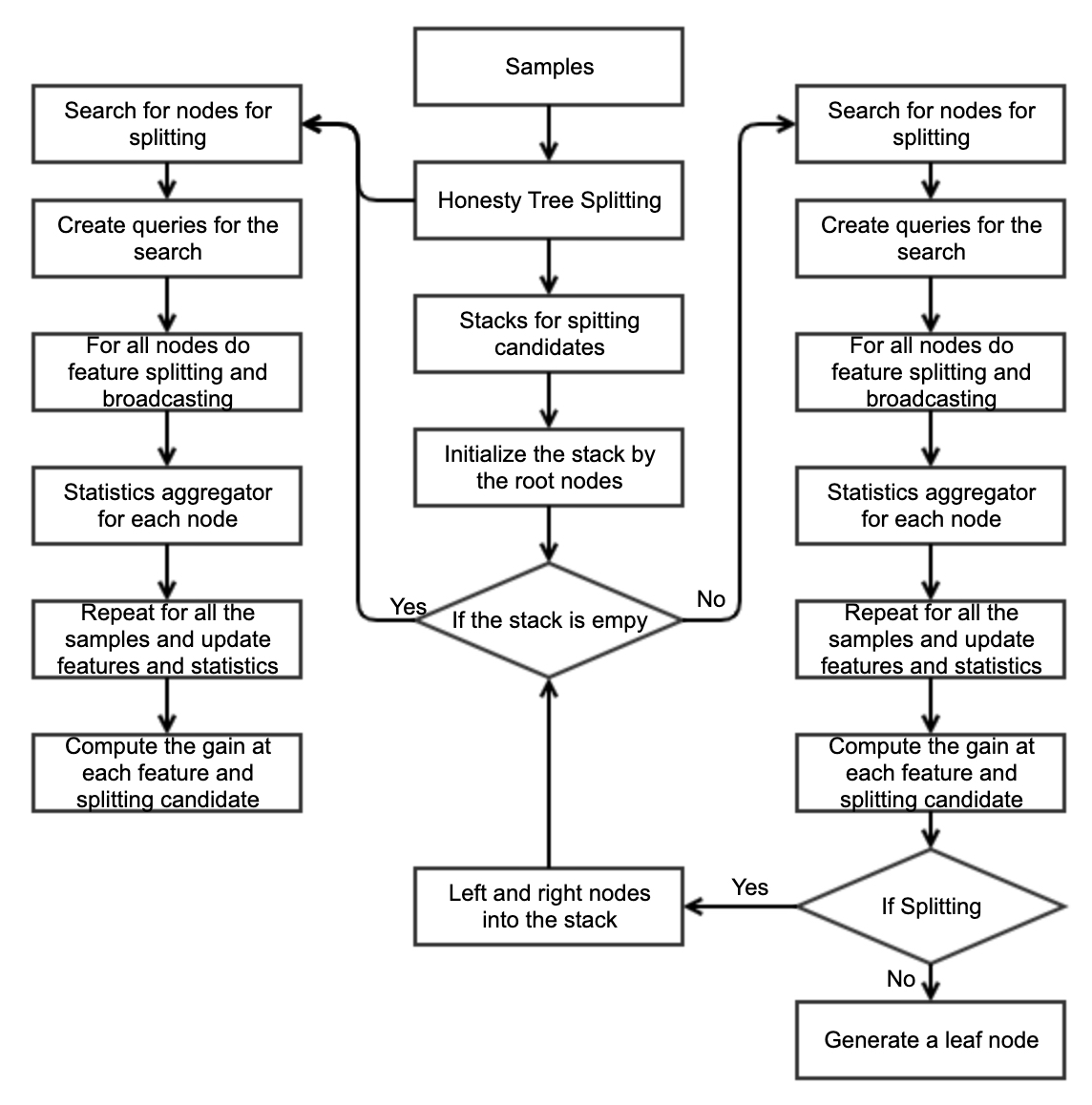}
  \caption{GCF algorithm workflow}\label{fig:spark_diag}
\end{figure}

\end{document}